\newtheorem{theorem}{Theorem}
\newtheorem{corollary}[theorem]{Corollary}
\newtheorem{assumption}{Assumption}
\newtheorem{lemma}[theorem]{Lemma}
\newtheorem{remark}{Remark}
\newcommand{\calG}{\mathcal{G}}
\newcommand{\calP}{\mathcal{P}}
\newcommand\independent{\protect\mathpalette{\protect\independenT}{\perp}}
\def\independenT#1#2{\mathrel{\rlap{$#1#2$}\mkern2mu{#1#2}}}
\newcommand{\arrowast}{
\,
 \begin{tikzpicture}
    \draw [{Classical TikZ Rightarrow[length=0.9mm]}-{Rays[n=6]}] (0,0)--(0.35,0); 
\end{tikzpicture}
\,
}
\newcommand{\circleedgecircle}{
\,
 \begin{tikzpicture}
    \draw [{Circle[open]}-{Circle[open]}] (0,0)--(0.5,0); 
\end{tikzpicture}
\,
}
\newcommand{\astarrow}{
\,
 \begin{tikzpicture}
    \draw [{Rays[n=6]}-{Classical TikZ Rightarrow[length=0.9mm]}] (0,0)--(0.35,0); 
\end{tikzpicture}
\,
}
\newcommand{\circlearrow}{
\,
 \begin{tikzpicture}
    \draw [{Circle[open]}-{Classical TikZ Rightarrow[length=0.9mm]}] (0,0)--(0.4,0); 
\end{tikzpicture}
\,
}
\newcommand{\sscmt}[1]{\textcolor{orange}{[SS: #1]}}
\begin{document}

\title{Coordinated Multi-Neighborhood Learning on a Directed Acyclic Graph}

\author{Stephen Smith and Qing Zhou \thanks{The authors are with the Department of Statistics and Data Science, University of California, Los Angeles, CA (e-mail: stephensmith13424@gmail.com; zhou@stat.ucla.edu)}
}


\maketitle

\begin{abstract}
Learning the structure of causal directed acyclic graphs (DAGs) is useful in many areas of machine learning and artificial intelligence, with wide applications. 
However, in the high-dimensional setting, it is challenging to obtain good empirical and theoretical results without strong and often restrictive assumptions. Additionally, it is questionable whether all of the variables purported to be included in the network are observable. It is of interest then to restrict consideration to a subset of the variables for relevant and reliable inferences. In fact, researchers in various disciplines can usually select a set of target nodes in the network for causal discovery. This paper develops a new constraint-based method for estimating the local structure around multiple user-specified target nodes, enabling coordination in structure learning between neighborhoods. Our method facilitates causal discovery without learning the entire DAG structure. We establish consistency results for our algorithm with respect to the local neighborhood structure of the target nodes in the true graph. Experimental results on synthetic and real-world data show that our algorithm is more accurate in learning the neighborhood structures with much less computational cost than standard methods that estimate the entire DAG. An \textsf{R} package implementing our methods may be accessed at \url{https://github.com/stephenvsmith/CML}.
\end{abstract}

\begin{IEEEkeywords}
  causal discovery, conditional independence, constraint-based method, structure learning, local learning
\end{IEEEkeywords}

\maketitle

\section{Introduction}\label{sec:intro}

\IEEEPARstart{I}{n} recent years there has been ongoing development in structure learning algorithms of directed acyclic graphs (DAGs) for causal discovery and inference \cite{causalsurvey,causaldiscoverysurvey,kaddour2022causal}. However, these algorithms are limited by strong assumptions and sometimes intractable practical requirements for a large number of variables, which render them unreasonable or too restrictive for use in many applications. Similarly, in empirical settings rapid deterioration of the speed and accuracy of most algorithms has been observed as the size of the network increases even moderately \cite{guzhou2020}. 
Moreover, in fields such as genomics, researchers are often interested in causal discovery for only a few nodes in order to estimate their causal effects on other downstream nodes. This can be particularly challenging, especially since data sets frequently have many features and relatively few observations \cite{friedman2000,gene_reg_causal}. In situations such as these, global causal discovery methods suffer from the problems previously mentioned. For the purposes of this paper, we distinguish between global and local algorithms based on the proportion of network variables included in the algorithm and its output. Global algorithms aim to estimate the entire graph among all nodes, while local algorithms are limited to estimation of a proper and usually much smaller subgraph. It would be advantageous for a suitable algorithm to leverage local knowledge of the causal structure to adequately and reliably answer causal inquiries at a level at least commensurate with the performance of global algorithms. Referring again to the field of genomics, for example, it may be of interest to consider only a few target genes in a gene regulatory network and identify the causal effects of these on other genes of interest \cite{Spirtes2000_gene}. Therefore, instead of learning as much as we can of the global causal structure, we aim to learn only the identifiable structure on a subgraph sufficient for the estimation of causal effects of interest. This local approach reflects a priority to estimate the causal effects most relevant to researchers and pursues causal discovery accordingly. 

Let $X$ be a target node in a DAG $G$. If the parent set of $X$ is given, then one can calculate its causal effect on any other variable in the DAG by the so-called back-door adjustment \cite{pearl_causality_2009}. On this basis, we reason that local learning algorithms should be primarily focused on estimating the neighborhood structure around the specified target nodes, particularly to identify parents for the purpose of estimating causal effects. Indeed, it is this concept which serves as the basis for the IDA \cite{maathuisEstimatingHighdimensionalIntervention2009} and joint-IDA \cite{nandy_estimating_2017} algorithms in the estimation of causal effects, both of which assume target parent sets are given. However, current procedures for learning parent sets suffer from inefficiencies inherent in global causal discovery algorithms or from a failure to adequately distinguish between parents, children, and spouses in some Markov blanket (Mb) learning algorithms \cite{aliferis10a,aliferis2,gaoMarkovDiscovery}. Our method is designed to correct these shortcomings by maintaining the efficiency of a local learning approach while attempting to orient as many identifiable edges in the subgraph of the DAG as possible, thereby reducing the number of potential parent sets. This motivates us to pursue coordinated local learning, since by learning the structure of target neighborhoods simultaneously, we can orient more edges than we could by considering individual neighborhoods.

The advantages of pursuing coordinated local structure learning include relaxing global assumptions and a substantial reduction in computational complexity and runtime, which is demonstrated by our empirical results. Moreover, coordinated learning in principle permits information to pass from disjoint neighborhoods in the graph, thus providing greater specificity in causal effect estimation by defining the possible parent sets of target nodes with greater precision. To the best of our knowledge, the idea of coordinating local learning of multiple neighborhoods has not been thoroughly explored in the relevant literature.

\subsection{Relevant Work}

\noindent Researchers in a variety of fields working with high-dimensional data sets are seeking to reduce the size of vast feature sets while retaining predictive power or preserving causal relationships \cite{mbfeatureselection}. To address this demand, various local learning methods have been suggested. In particular, a substantial amount of work has been dedicated to estimating Mbs or the parent-child set, especially as such methods relate to the feature selection problem \cite{khanNovelFeatureSelection2022}. Many of these algorithms were developed for the purpose of improving predictive models, since a robust understanding of the causal mechanisms of the data generating process should improve the accuracy of predictions, particularly when the data undergoes an intervention of some kind \cite{mbMultiInterventions}. It has also been shown that the Mb is the theoretically optimal feature set for prediction given a faithful distribution \cite{tsamardinos03a,koller1996toward}. However, many Mb learning algorithms do not identify the parent set, thus limiting the potential effectiveness of causal effect estimation algorithms. Though this is a frequent shortcoming, there are some local methods which attempt to determine the parent, children, and spouse sets of the target node \cite{pellet_using_nodate,FangLocal}. 

While there are existing methods for estimating the neighborhood and the graphical structure around a single target node \cite{margaritis_mb,tsamardinos_large_scale,hiton03,niinimakiLocalStructureDiscovery,gaoMarkovDiscovery,borboudakisForwardBackwardSelectionEarly,GUO2022849}, there are, to our knowledge, no existing methods to coordinate structure learning around multiple neighborhoods of interest without learning the global structure. Applying existing methods to each neighborhood individually is sound for identifying members of each neighborhood. 
However, these methods are limited in their ability to orient edges within neighborhoods due to the inevitable loss of structural information when restricting consideration to a single neighborhood. Moreover, it would be particularly challenging to identify any topological ordering between the neighborhoods. Causal discovery in such a scenario is excessively restricted, and a method to coordinate structure learning over multiple neighborhoods is desired.

\subsection{Contribution}

\noindent In this paper, we develop a method to address this lacuna in the literature. The Coordinated Multi-Neighborhood Learning (CML) algorithm is designed to maximize causal structure learning in targeted neighborhoods with efficiency and scalability. We do not need to estimate the entire graphical structure, but instead we limit our attention to only the relevant subgraph over the target neighborhoods, which may be disjoint. Our algorithm first identifies target neighborhoods using existing Mb estimation methods. We then develop a two-stage constraint-based algorithm. The first stage is composed of two phases for skeleton recovery. The first phase constructs the skeleton of a maximal ancestral graph (MAG)~\cite{richardsonAncestralGraphMarkov2002,zhangCausalReasoningAncestral} over the union of target neighborhoods, maintaining ancestral relationships connecting the distinct neighborhoods. The second phase further prunes edges within each neighborhood by making use of additional conditional independence (CI) relations. The last stage involves applying a subset of the complete Fast Causal Inference (FCI) rules~\cite{zhang2008} to simultaneously orient edges in all neighborhoods. The contributions of this work are summarized below:

\begin{itemize}
    \item A novel framework for learning the local structures around multiple target nodes with coordinated edge orientation rules even for disjoint neighborhoods;
    \item Faster computation times and greater specificity in defining the parent sets of the target nodes for causal inference via the back-door adjustment;
    \item Theoretical guarantees for local structure learning, including high-dimensional consistency for Gaussian data.
\end{itemize}

The outline for the paper is as follows. In Section~\ref{sec:prelims}, we discuss basic graph terminology and review DAGs and ancestral graphs. In Section~\ref{sec:cml}, we will introduce CML and discuss its important features. Additionally, we provide theoretical results for our method, including its structure learning consistency in the large-sample limit and the high-dimensional setting. Then, we consider how the algorithm performs empirically on simulated data sets in Section~\ref{sec:empiricalanalysis} and on a real gene expression data set in Section~\ref{sec:generegulatory}. In Section~\ref{sec:conclusion}, we conclude by summarizing our findings and providing future research directions.

\section{Preliminaries}\label{sec:prelims}

\noindent Given a vector of random variables $X=(X_1,X_2,\ldots,X_p)$, a graph $G = (V,E)$ is composed of a set of nodes, or vertices, $V = \{V_1,\ldots,V_p\}$ and a set of edges $E \subseteq V \times V$. The nodes, which for convenience we also write as $V = [p]:=\{1,\ldots,p\}$, correspond to the elements of $X$.

If $(i,j) \in E$ but $(j,i) \notin E$, then there is a directed edge between nodes $i$ and $j$, denoted $i \rightarrow j$. If such an edge is present in $G$, then $i$ is a parent of $j$ and $j$ is a child of $i$. The parent set of node $i$ is denoted $pa_G(i)$, and the child set is denoted $ch_G(i)$. On the other hand, if both $(i,j),(j,i) \in E$, then $i - j$ is an undirected edge in $G$. The adjacency set of node $i$ in $G$, denoted $adj_G(i)$, is the set of all nodes connected to $i$ by any kind of edge. Each node $j \in adj_G(i)$ is said to be adjacent to $i$.

For endpoint nodes $i,j \in V$, a path $\pi$ in $G$ from $i$ to $j$ is a sequence of at least two adjacent nodes $\pi=\langle i=a_{1}, a_2,\ldots,a_{q}=j \rangle$ where all $a_k$'s are distinct. If each edge is oriented $a_k\to a_{k+1}$ on the path, then this is called a directed path from $i$ to $j$. If $G$ contains a path $\pi_v = \langle i,k,j \rangle$ where $i \rightarrow k$ and $k \leftarrow j$ and the path endpoints are non-adjacent, then these nodes form a $v$-structure denoted $(i,j,k)$. A spouse of $i$ in $G$ is a non-adjacent node which shares at least one child with $i$. The set of spouses of $i$ in $G$ is denoted $sp_G(i)$. For $v$-structure $(i,j,k)$ in $G$, $i \in sp_G(j)$ and $j \in sp_G(i)$.

If there is a directed path from $i$ to $j$ in $G$ as well as a directed path from $j$ to $i$, then $G$ contains a cycle. If $G$ does not contain any cycles and all the edges in $G$ are directed, then $G$ is called a Directed Acyclic Graph (DAG). Given a DAG $G$, the causal relations among $X$ are modeled via a structural equation model (SEM),
\begin{equation*}
X_i=f_i(X_{pa_G(i)},\varepsilon_i), \quad i=1,\ldots,p,
\label{eqn:SEM}
\end{equation*}
where $f_i$ are deterministic functions and $\varepsilon_i$ are independent background variables. This implies that the joint distribution $P(X)$ is Markov with respect to $G$, meaning the probability distribution admits the factorization
\begin{equation*}
    P(X_1,\ldots,X_p) = \prod_{i \in V} P(X_i \mid X_{pa_G(i)}),
\end{equation*}
according to $G$ \cite{pearl_causality_2009}. In addition, a probability distribution $P$ is said to be faithful to a DAG $G$ when there is a one-to-one correspondence between the CI relations of the distribution $P$ and the $d$-separations in the DAG. 

The Markov blanket (Mb) of node $i$ is the minimal set $mb_G(i)$ for which $X_i$ is rendered conditionally independent of the remaining nodes of the graph given $\{X_j : j \in mb_G(i)\}$ \cite{margaritis_mb,tsamardinos_large_scale}.
Faithfulness and causal sufficiency imply that the Mb is the union of a node's parents, children, and spouses, written as $mb_G(i) = pa_G(i) \cup ch_G(i) \cup sp_G(i)$ \cite{pearl1988probabilistic,aliferis10a}.
We define the set of first-order neighbors of a node $i$ to be its Mb, denoted $N_i^1$. We call $NB_i := N_i^1 \cup \{i\}$ the neighborhood of $i$. The second-order neighbor set of node $i$ is the union of the Mbs for each node in the first-order neighborhood, except for nodes in $NB_i$, denoted $N_i^2 := \cup_{j \in N_i^1} N_j^1 \setminus NB_i$.  For a set of nodes $T$, the union of their neighborhoods is denoted $NB_T := \cup_{t \in T} NB_t$. 

\subsection{Ancestral Graphs}

\noindent For the local learning problem in this work, we intend to learn the structure of local neighborhoods, coordinating the results such that causal information, as encoded in edge orientation, can, in principle, be passed from one neighborhood to another. Unlike many other constraint-based algorithms, ours treats some of the nodes as latent to facilitate inference of between-neighborhood edges and their orientations. For clarity, it must be acknowledged that none of the nodes are latent in the usual sense. That is, we have observed data for each of the these variables. However, many variables will not be included in the algorithm or its output.  
Thus, we prefer to say that these variables are {\em graphically latent}. That is, the nodes associated with these variables will not be included in the node set of the output graph, though they may be used during the execution of the algorithm. Specifically, only target nodes as well as first- and second-order neighbors (i.e., $NB_T \cup NB_T^2$, where $NB_T^2 := \cup_{t \in T} N_t^2$) will be involved in the CI tests of our local learning algorithm, while the remaining nodes will be excluded from our consideration. Moreover, second-order neighbors are excluded from the graphical output after being used for some CI queries.  

The local learning problem across multiple neighborhoods requires a different class of graphs to accommodate graphically latent variables while retaining the capacity for encoding causal information. For this purpose, we use ancestral graphs due to their facility for conveniently representing causal information on observed nodes in the presence of latent variables \cite{richardsonAncestralGraphMarkov2002,zhang2008}. 

Ancestral graphs are a class of mixed graphs. In this work, we only consider two kinds of edges in an ancestral graph $G$: directed ($\rightarrow$) and bi-directed ($\leftrightarrow$). We do not consider selection bias by assumption, thus removing the possibility of an undirected edge. The markings at each end of the edge are called marks or orientations. A directed edge contains a tail and an arrowhead, and a bi-directed edge contains two arrowheads. For edge $i \rightarrow j$ in $G$, the edge is said to be out of node $i$ and into node $j$ because the mark of the edge is a tail at $i$ and an arrowhead at $j$. Nodes $i$ and $j$ are siblings if $i \leftrightarrow j$, which implies that they share a common latent cause.

 For any pair of nodes $i,j \in V$, if $i = j$ or there is a directed path from $i$ to $j$, then we call $i$ an ancestor of $j$ and $j$ a descendant of $i$. The set of ancestors of $j$ in mixed graph $G$ is denoted $an_G(j)$. As with conventional graphs, if $i \rightarrow j$ and there is a directed path from $j$ to $i$ in $G$, then we can say $G$ contains a directed cycle. Also, if $i \leftrightarrow j$ and there is a directed path from $j$ to $i$ in $G$, then we can say $G$ contains an almost directed cycle. If a mixed graph $G$ does not contain any directed or almost directed cycles, then $G$ is ancestral \cite{zhang2008}.

A node $i$ is a collider on path $\pi$ if the two edges connecting to $i$ on $\pi$ are both into $i$ \cite{zhang2008}. All other nodes on the path are called non-colliders. Using these definitions, we can characterize the graphical criteria for CI relations in an ancestral graph. 
In a mixed graph, a path $\pi$ from $i$ to $j$ is $m$-connecting given a set of nodes $S\subseteq V \setminus \{i,j\}$ if 
(i) for every non-collider $k_{nc}$ on $\pi$, $k_{nc} \notin S$;
(ii) for every collider $k_c$ on $\pi$, there exists some $d\in S$ such that $k_c \in an_{G}(d)$ \cite{zhang2008}.
If there is no $m$-connecting path between $i$ and $j$ given $S$, then $i$ and $j$ are $m$-separated by $S$. Similar to $d$-separation for DAGs, $m$-separation in a mixed graph implies CI among observed variables via the global Markov property \cite{richardsonAncestralGraphMarkov2002}.  

A path $\pi$ from $i$ to $j$ is said to be inducing relative to set $S$ if every node on the path not in $S \cup \{i,j\}$ is a collider on the path and an ancestor of either $i$ or $j$ \cite{zhangCausalReasoningAncestral}. If $S$ is empty, we simply call $\pi$ an inducing path.
If there are no inducing paths between non-adjacent nodes in ancestral graph $G$, then $G$ is called a Maximal Ancestral Graph (MAG) \cite{richardsonAncestralGraphMarkov2002}. Accordingly, every pair of non-adjacent nodes in a MAG is $m$-separated by some subset of nodes. Thus, by definition, a MAG encodes CI relations among the observed nodes in a DAG with latent variables. Similar to DAGs, multiple MAGs may encode the same set of conditional independencies observable from data and form an equivalence class, represented by a partial ancestral graph (PAG).
A PAG has three possible kinds of marks with the addition of the circle ($\circ$) along with the tail and the arrowhead. Each circle mark corresponds to a variant mark and each non-circle mark is invariant in the equivalence class of a MAG \cite{zhangCausalReasoningAncestral}.

\subsection{FCI Algorithm}
\label{sec:fcirules}

\noindent The Fast Causal Inference (FCI) algorithm \cite{Spirtes2000,zhang2008} is a constraint-based algorithm designed for causal discovery in the presence of latent variables and selection bias not explicitly provided by the data set. It is similar to the PC algorithm, consisting of skeleton learning via CI tests and a set of edge orientation rules. After $v$-structure orientation ($\mathcal{R}_0$), the first three rules ($\mathcal{R}_1$-$\mathcal{R}_3$) are essentially the same as Meek's rules for learning the CPDAG of causal DAGs, and for that problem they are shown to be sound and complete \cite{meeksrules}. Rule $\mathcal{R}_4$ is particularly suited for MAGs containing bi-directed edges \cite{zhang2008}. This original set of rules was sound, but not complete. The rule set was augmented in \cite{zhang2008} with $\mathcal{R}_8$-$\mathcal{R}_{10}$, which are necessary to pick up all of the invariant tails such that the rule set is complete. Consequently, the output of the algorithm with a CI oracle (i.e., the population version) is the PAG representing the equivalence class of the true MAG. Note that we refrain from using rules $\mathcal{R}_5$-$\mathcal{R}_7$ in \cite{zhang2008} because these rules are only relevant where selection bias is present, and we assume the absence of selection bias in this work. The computational cost of the FCI algorithm can be very high for large graphs, and therefore, recent efforts have been made to improve its computational efficiency \cite{colomboLearningHighdimensionalDirected2012,chen2023}.

As we describe below, we will apply the FCI orientation rules in our algorithm to coordinate edge orientation among the neighborhoods of multiple target nodes. This will allow us to orient more edges and further reduce the uncertainty in causal parent identification than if we learned each neighborhood individually.

\section{CML Algorithm}
\label{sec:cml}

\noindent In many research scenarios, we can safely assume sufficient background knowledge to identify a set of target nodes $T$. These nodes will be designated by the user for the purpose of learning their local structures simultaneously, and the parent set of each node in particular. Recall that the parent set of a target node can be used as an adjustment set to identify the causal effect of this node on all other variables via the back-door adjustment. Our method may also partially identify the topological relationship among the target neighborhoods in the underlying DAG, though this global graph structure is never estimated.

Let $NB_T:=\cup_{t\in T}NB_t$ denote the union of the target neighborhoods, where $NB_t=N_t^1 \cup \{t\}$ is the neighborhood of node $t$. The basic idea of the CML algorithm is illustrated in Fig.~\ref{fig:illustration}, where $T = \{X_3,X_8\}$ is the set of target nodes. Fig.~\ref{fig:illustrationDAG} provides the true graphical structure of the targets and the first- and second-order neighbors, while there could be many other nodes outside of these neighborhoods. We want to learn the structure of the two neighborhoods $NB_3=\{X_1,X_2,X_3,X_4,X_5\}$ and $NB_8=\{X_7,X_8,X_9,X_{10}\}$. To coordinate structure learning, we make use of two paths $\pi_1=\langle 4,6,11,9\rangle$ and $\pi_2=\langle 2,12,9 \rangle$ between $NB_3$ and $NB_8$, of which the intermediate nodes are outside of the neighborhoods. Treating $L=\{X_6,X_{11},X_{12},X_{13}\}$ as latent variables, both $\pi_1$ and $\pi_2$ are inducing paths relative to $L$ and thus will be represented as edges in the MAG $\mathcal{M}$ over $NB_T$. The skeleton of $\mathcal{M}$ is shown in Fig.~\ref{fig:cmlskel}. Such between-neighborhood edges permit us to pass information between distinct neighborhoods through FCI orientation rules. On the other hand, by considering second-order neighbors of a target node, such as $X_{13}$, we can ensure that false positive edges within the same neighborhood, e.g. $(1,2)$, will be removed from the MAG $\mathcal{M}$, producing the estimated graph by CML shown in Fig.~\ref{fig:cml}.


\subsection{Algorithm Details}

\noindent For the population version of the CML algorithm, we assume perfect knowledge of the conditional independence information, or a CI oracle, which will be replaced with CI tests for the sample version. We assume that we are provided with (estimated) first- and second-order neighbors of each target node, $N_t^1$ and $N_t^2$, respectively, for $t\in T$. Estimation of neighbor sets is done by existing Mb learning algorithms; see the Supplementary Material for more details.

Algorithm~\ref{alg:CML} outlines the steps of the CML algorithm. Similar to the PC and FCI algorithms, the first stage of our algorithm is to recover the skeletal structure of the subgraph over the target neighborhoods $NB_T$. In the second stage, we apply inference rules to simultaneously orient edges in all the neighborhoods. 

\begin{algorithm}
\caption{Coordinated Multi-Neighborhood Learning \label{alg:CML}}
\begin{algorithmic}[1]
\State $O\gets NB_T$; $E\gets$ edge set of complete, undirected graph over $O$. \label{lst:line:inputs}
\For{$(i,j) \in E$} \label{lst:line:globalskel}
\State Search for separating set $S_{ij} \subseteq O \setminus \{i,j\}$ such that $X_i \independent X_j \mid S_{ij}$. \label{lst:line:sepsetsearch1}
\State If $S_{ij}$ is found, then update $E \leftarrow E \setminus \{(i,j),(j,i)\}$. \label{lst:line:updateedge1}
\EndFor \label{lst:line:endglobalskel}
\State $E_t \leftarrow \{(i,j) \in E: i,j \in NB_t \}$ for all $t\in T$. \label{lst:line:targetnbhd}
\For{$t \in T$} \label{lst:line:targetnbhdskel}
\For{$(i,j) \in E_t$}
\State Search for $S_{ij} \subseteq N^1_i \setminus \{j\}$ or $S_{ij}\subseteq N^1_j\setminus \{i\}$ such that $X_i \independent X_j \mid S_{ij}$. \label{lst:line:sepsetsearch2}
\State If $S_{ij}$ is found, then update $E \leftarrow E \setminus \{(i,j),(j,i)\}$.\label{lst:line:updateedge2}
\EndFor \label{lst:line:endcheckedges}
\EndFor \label{lst:line:endtargetnbhdskel}
\State Replace each edge with $\circleedgecircle$
\State Apply $\mathcal{R}_0$ of the FCI algorithm to identify $v$-structures based on $E$ and $\{S_{ij}\}$. \label{lst:line:vstruct}
\State Apply FCI rules $\mathcal{R}_1$ to $\mathcal{R}_4$ and $\mathcal{R}_8$ to $\mathcal{R}_{10}$ until none of them apply. \label{lst:line:fcirules}
\State Modify edge marks within each single neighborhood with rule $\mathcal{R}_N$. \label{lst:line:localrules}
\end{algorithmic}
\end{algorithm}

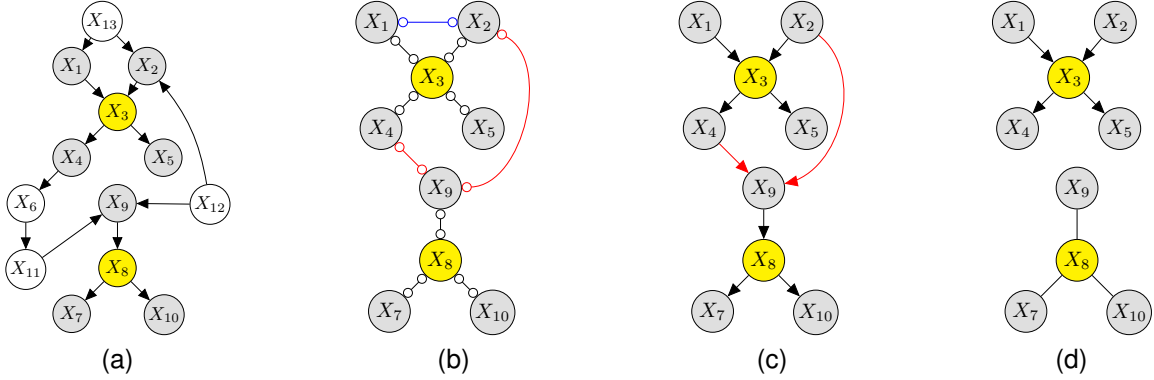
\begin{figure*}[!t]
    \centering
    \subfloat[]{
    \resizebox{!}{1.75in}{
    \begin{tikzpicture}[]
\node[obs] (X1) {$X_1$};
\node[obs, right=0.7 of X1] (X2) {$X_2$};
\node[obs,below right=0.5 of X1,fill=yellow] (X3) {$X_3$};
\node[obs,below left=0.5 of X3] (X4) {$X_4$};
\node[obs,below right=0.5 of X3] (X5) {$X_5$};

\node[latent,below left=0.5 of X4] (X6) {$X_6$};
\node[latent,above left=0.45 of X2] (X13) {$X_{13}$};

\node[obs,below=1 of X3] (X9) {$X_9$};
\node[obs,below=0.5 of X9,fill=yellow] (X8) {$X_8$};
\node[obs,below left=0.5 of X8] (X7) {$X_7$};
\node[obs,below right=0.5 of X8] (X10) {$X_{10}$};

\node[latent,below=0.5 of X6] (X11) {$X_{11}$};
\node[latent,below right=0.5 of X5] (X12) {$X_{12}$};

\edge[->] {X1,X2}{X3};
\edge[->] {X3}{X4,X5};
\edge[->] {X4}{X6};
\edge[->] {X6}{X11};
\edge[->] {X8}{X7};
\edge[->] {X8}{X10};
\edge[->] {X9}{X8};
\edge[->] {X11}{X9};
\draw[->] (X12) edge[bend right=15] node [left] {} (X2);
\edge[->] {X12}{X9};
\edge[->] {X13}{X1};
\edge[->] {X13}{X2};
\end{tikzpicture}
}
\label{fig:illustrationDAG}
    }
    \hfil
    \subfloat[]{
    \resizebox{!}{1.75in}{
    \begin{tikzpicture}[]
\node[obs] (X1) {$X_1$};
\node[obs, right=1 of X1] (X2) {$X_2$};
\node[obs,below right=0.6 of X1,fill=yellow] (X3) {$X_3$};
\node[obs,below left=0.5 of X3] (X4) {$X_4$};
\node[obs,below right=0.5 of X3] (X5) {$X_5$};

\node[obs,below right=0.7 of X4] (X9) {$X_9$};
\node[obs,below=0.5 of X9,fill=yellow] (X8) {$X_8$};
\node[obs,below left=0.5 of X8] (X7) {$X_7$};
\node[obs,below right=0.5 of X8] (X10) {$X_{10}$};

\draw[o-o] (X2) -- (X3);
\draw[o-o] (X1) -- (X3);
\edge[o-o] {X3}{X4,X5};
\draw[o-o,red] (X4) -- (X9);
\draw[o-o,red] (X2) edge[bend left=75] node [left] {} (X9);
\draw[o-o] (X8) -- (X7);

\edge[o-o] {X8}{X9};
\edge[o-o] {X8}{X10};
\edge[o-o,blue] {X1}{X2};
\end{tikzpicture}
}
\label{fig:cmlskel}
}
    \hfil
    \subfloat[]{
    \resizebox{!}{1.75in}{
    \begin{tikzpicture}[]
\node[obs] (X1) {$X_1$};
\node[obs, right=1 of X1] (X2) {$X_2$};
\node[obs,below right=0.6 of X1,fill=yellow] (X3) {$X_3$};
\node[obs,below left=0.5 of X3] (X4) {$X_4$};
\node[obs,below right=0.5 of X3] (X5) {$X_5$};

\node[obs,below right=0.7 of X4] (X9) {$X_9$};
\node[obs,below=0.5 of X9,fill=yellow] (X8) {$X_8$};
\node[obs,below left=0.5 of X8] (X7) {$X_7$};
\node[obs,below right=0.5 of X8] (X10) {$X_{10}$};

\draw[->] (X2) -- (X3);
\draw[->] (X1) -- (X3);
\edge[->] {X3}{X4,X5};
\draw[->,red] (X4) -- (X9);
\draw[<-,red] (X9) edge[bend right=65] node [left] {} (X2);
\draw[->] (X8) -- (X7);

\edge[<-] {X8}{X9};
\edge[->] {X8}{X10};

\end{tikzpicture}
}
\label{fig:cml}
}
    \hfil
    \subfloat[]{
    \resizebox{!}{1.75in}{
    \begin{tikzpicture}[]
\node[obs] (X1) {$X_1$};
\node[obs, right=1 of X1] (X2) {$X_2$};
\node[obs,below right=0.6 of X1,fill=yellow] (X3) {$X_3$};
\node[obs,below left=0.5 of X3] (X4) {$X_4$};
\node[obs,below right=0.5 of X3] (X5) {$X_5$};

\node[obs,below right=0.7 of X4] (X9) {$X_9$};
\node[obs,below=0.5 of X9,fill=yellow] (X8) {$X_8$};
\node[obs,below left=0.5 of X8] (X7) {$X_7$};
\node[obs,below right=0.5 of X8] (X10) {$X_{10}$};

\draw[->] (X2) -- (X3);
\draw[->] (X1) -- (X3);
\edge[->] {X3}{X4,X5};
\draw[-] (X7) -- (X8);

\edge[-] {X8}{X9};
\edge[-] {X8}{X10};
\end{tikzpicture}
}
\label{fig:singlenbhd}
    }
\caption{An illustration of the CML algorithm. (a) The neighborhoods of two target nodes. The highlighted nodes $\{X_3,X_8\}$ are the specified target nodes, the gray nodes are members of the Mb of one of the target nodes, and the white nodes are second-order neighbors. (b) The graph produced after the first phase of skeleton recovery. Edges in red are between-neighborhood edges and edges in black are within-neighborhood edges. The edge in blue will be removed during the second phase of skeleton recovery. (c) Output of the CML algorithm. (d) Output of the Single Neighborhood Learning (SNL) algorithm.}
\label{fig:illustration}
\end{figure*}

In the skeleton learning stage (lines~\ref{lst:line:inputs} to~\ref{lst:line:endtargetnbhdskel}), we begin with a complete graph over $NB_T$
and recursively delete edges based on the CI oracle, or CI tests in the sample version. 
However, in order to ensure that edges between the neighborhoods are properly maintained for coordinating orientations, this stage takes place in two successive phases. The first phase (lines~\ref{lst:line:globalskel} to~\ref{lst:line:endglobalskel}), the union skeleton recovery phase, is equivalent to the skeleton learning in the FCI algorithm, regarding $O=NB_T$ as the only observed variables. Thus, only subsets of $NB_T$ are possible separation sets. Edges between two different neighborhoods may be preserved due to lack of separation sets. Consider the two neighborhoods $NB_3=\{1,2,3,4,5\}$ and $NB_8=\{7,8,9,10\}$ in Fig.~\ref{fig:illustrationDAG}. No subsets of $NB_3\cup NB_8$ $m$-separate $X_4$ and $X_9$, and thus the edge $(4,9)$ will remain in the skeleton, and similarly the edge $(2,9)$ in Fig.~\ref{fig:cmlskel}. Note that the two edges correspond to inducing paths relative to $L=\{6,11,12,13\}$. Such between-neighborhoods edges will facilitate coordinated orientation in the latter stage of our algorithm.

After this phase, there may be extraneous edges present within each target neighborhood, such as the edge $(1,2)$ in Fig.~\ref{fig:cmlskel}. During the second phase (lines~\ref{lst:line:targetnbhd} to~\ref{lst:line:endtargetnbhdskel}), the local skeletons recovery phase, we narrow our focus to one target neighborhood at a time, each considered in succession. Now we make use of the second-order neighbors and search for separating sets in $N_i^1$ or $N_j^1$ for $i,j \in NB_t$ to further delete edges within the same neighborhood. Consider $N_1^1 = \{2,3,13\}$ and $N_2^1 = \{1,3,13\}$ in Fig.~\ref{fig:illustrationDAG} in order to remove the extraneous edge $(1,2)$. Though nodes 1 and 2 could not be separated by any subset of $NB_3$ in the first phase, they are separated by node $13$, a second-order neighbor of $X_{3}$, in the second phase. Therefore, the edge $(1,2)$ is removed in Fig.~\ref{fig:cml}.

\begin{remark}
A key difference between the two skeleton learning phases lies in the potential separating sets. In the first phase, only first-order neighbors $NB_T$ of the target nodes $T$ are considered, while the second-order neighbors are
included in the second phase for learning each individual neighborhood.
By construction every edge between two nodes not in the same neighborhood will remain connected during the second phase of skeleton recovery. In this way, learning across neighborhoods can be coordinated through these retained edges between the neighborhoods. On the other hand, by using second-order neighbors in the second phase, the local learning of individual neighborhoods is maximally informative.
\end{remark}

For the example in Fig.~\ref{fig:illustration} we discussed, we will obtain the skeleton of the graph in Fig.~\ref{fig:cml} after line~\ref{lst:line:endtargetnbhdskel} of our algorithm. If we only applied the first phase of skeleton learning, we would obtain 
the graph in Fig.~\ref{fig:cmlskel} with a false positive edge $(1,2)$ within one neighborhood. If we only applied the second phase, we would estimate the skeleton of the graph in Fig.~\ref{fig:singlenbhd} with no edges connecting the two neighborhoods, and thus could not coordinate their structure learning.

After the skeleton recovery stage, we identify $v$-structures using the stored separation sets (line~\ref{lst:line:vstruct}) and then apply the relevant FCI rules (line~\ref{lst:line:fcirules}) discussed in Section~\ref{sec:fcirules}. 
After line~\ref{lst:line:fcirules}, there may be four types of edges ($\leftrightarrow$, $\rightarrow$, $\circlearrow$, $\circleedgecircle$) in the estimated PAG. However, with knowledge of the first- and second-order neighbors, there should be no bidirected edges between two nodes in the same neighborhood. Consequently, we apply an additional set of rules ($\mathcal{R}_N$) to simplify the edge marks in a neighborhood.
\begin{list}{}{}
\item $\mathcal{R}_N$: For nodes $i$ and $j$ in the same target neighborhood, convert $i \circleedgecircle j$ to an undirected edge $i-j$, and convert $i \circlearrow j$ to a directed edge $i \to j$.
\end{list}
The soundness of $\mathcal{R}_N$ follows from the interpretation of the edge marks in MAGs and the fact that there are no bidirected edges between pairs of nodes in the same neighborhood. For an edge $i \circlearrow j$, the possible orientations are $i\to j$ or $i\leftrightarrow j$, but the latter (bidirected) is excluded within a neighborhood, and thus the orientation must be $i\to j$. This is because we assume knowledge of $N_t^1$ and $N_t^2$ and the absence of latent confounders for each neighborhood, which prevents there being any bidirected edge between the two nodes. A similar line of reasoning applies to changing $i \circleedgecircle j$ to $i - j$, since the latter still denotes uncertainty regarding causal direction while denying the possibility of a bidirected edge between nodes belonging to the same neighborhood.

For a finite sample, a bidirected edge could appear within a neighborhood in the CML output. 
In such a case, we also interpret it as an undirected edge to resolve this conflict in a practical way. This is similar to the situation of conflicting $v$-structures in DAG learning procedures. However, we keep these bidirected edges in the output in case there are unknown latent variables involved or missing neighbors in the estimated Mb.

The advantage of our method is illustrated in Fig.~\ref{fig:illustration}. Fig.~\ref{fig:cml} shows the learned graph by the CML algorithm, in which all edges in the two neighborhoods $NB_3$ and $NB_8$ have been oriented and the two between-neighborhood edges are shown in red. If we were to apply our algorithm to each of the two neighborhoods separately, then the output graphs would be the two shown in Fig.~\ref{fig:singlenbhd}. Because the subgraph over $NB_8$ contains no $v$-structure, none of the four edges would be oriented. This is in sharp contrast to the result in Fig.~\ref{fig:cml}, in which the $v$-structure $(2,4,9)$ leads to the orientation of all edges within $NB_8$ by compelling the remaining edges.


\begin{remark}
For each target node $t\in T$, the output graph from CML provides a set of parents $\widehat{pa}_t$ and possible parents $\widehat{pp}_t$ (nodes adjacent to $t$ by an undirected edge) in the neighborhood $NB_t$. Once we have these estimates of the target neighborhoods, we can use existing causal effect estimation procedures such as recursive regressions for causal effects (RRC) or modifying Cholesky decompositions (MCD) of the covariance matrix to infer causal effects of interest \cite{nandy_estimating_2017}; see the Supplementary Material for details. 
\label{rmk:causaleffectinputs}
\end{remark}

\subsection{Theoretical Analysis}
\label{sec:theoreticalanalysis}

\noindent To perform theoretical analysis of Algorithm~\ref{alg:CML}, we start with defining the ground-truth graph for the multiple neighborhood learning problem. Let $\calG=\calG(V)$ be a DAG over vertex set $V=[p]$, $T\subset V$ be a set of target nodes, and $N=NB_T$ be the union of the neighborhoods of $t\in T$. Then, $B:=N\times N-\cup_{t\in T} NB_t\times NB_t$ is the set of node pairs that do not belong to any common neighborhood, which is referred to as between-neighborhood pairs. Denote by $\calG_N$ the subgraph of $\calG$ over $N$. For each $(i,j)\in B$, if there is an inducing path between them relative to $L=V-N$ in $\calG$, add an edge between $i$ and $j$ to $\calG_N$ with the following orientation rules: (i) orient as $i\to j$ if $i\in\text{an}_\calG(j)$; (ii) orient as $j\to i$ if $j\in\text{an}_\calG(i)$; (iii) otherwise, orient as $i\leftrightarrow j$.
Denote the resulting graph as $\calG^*_N$. As an example, if $\calG$ is the DAG in Fig.~\ref{fig:illustrationDAG} with $T=\{3,8\}$, then $\calG^*_N$ is the graph in Fig.~\ref{fig:cml}.

\begin{assumption}\label{asp:inp}
In the DAG $\calG$, there is no inducing path relative to $L$ between any two nodes in the same neighborhood $NB_t$, $t\in T$, on which some intermediate node is in $N\setminus NB_t$.
\end{assumption}

\begin{lemma}\label{lm:mag}
Under Assumption~\ref{asp:inp}, the $\calG^*_N$ defined by the above procedure is a MAG.
\end{lemma}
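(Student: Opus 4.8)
The plan is to verify the two defining properties of a MAG for $\calG^*_N$ separately: that it is ancestral (no directed or almost-directed cycles) and that it is maximal (no inducing path between any pair of non-adjacent nodes). Throughout I would use the single structural observation that every directed edge of $\calG^*_N$ respects ancestry in $\calG$: a within-neighborhood edge $a\to b$ is inherited directly from the DAG, so $a\in\mathrm{an}_\calG(b)$, and a between-neighborhood edge $a\to b$ is added by rule (i) precisely when $a\in\mathrm{an}_\calG(b)$. Composing directed edges then gives $\mathrm{an}_{\calG^*_N}(x)\subseteq\mathrm{an}_\calG(x)$ for every node $x$, while each bidirected edge $a\leftrightarrow b$ is added by rule (iii) only when $a\notin\mathrm{an}_\calG(b)$ and $b\notin\mathrm{an}_\calG(a)$. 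The ancestral property is then immediate: a directed cycle in $\calG^*_N$ would, by the containment above, yield a directed cycle in $\calG$, contradicting acyclicity; and an almost-directed cycle would require a bidirected edge $a\leftrightarrow b$ together with a directed path from $b$ to $a$ in $\calG^*_N$, the latter forcing $b\in\mathrm{an}_{\calG^*_N}(a)\subseteq\mathrm{an}_\calG(a)$, contradicting the condition under which $a\leftrightarrow b$ was created. Since $\calG^*_N$ has no undirected edges, no further checks are needed.

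For maximality I would argue by contradiction: suppose $i$ and $j$ are non-adjacent in $\calG^*_N$ yet joined by an inducing path $\pi^*=\langle i=b_0,b_1,\dots,b_m=j\rangle$ in $\calG^*_N$, so each intermediate $b_k$ is a collider on $\pi^*$ and lies in $\mathrm{an}_{\calG^*_N}(\{i,j\})\subseteq\mathrm{an}_\calG(\{i,j\})$. Each edge of $\pi^*$ is backed by an inducing path relative to $L$ in $\calG$ -- trivially for a within-neighborhood edge, and by construction for a between-neighborhood edge. The key step is to splice these backing paths at the junction nodes $b_k$ into a single inducing path $\pi$ relative to $L$ between $i$ and $j$ in $\calG$; this should be possible because each $b_k$ is a collider on $\pi^*$ (so both incident backing paths meet $b_k$ with arrowheads) and an ancestor in $\calG$ of $i$ or $j$, which are exactly the conditions that permit inducing-path composition while keeping $b_k$ a collider that is an ancestor of an endpoint.

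With $\pi$ in hand I would distinguish three cases. If $(i,j)$ is a between-neighborhood pair, the existence of $\pi$ contradicts the fact that no edge was added between $i$ and $j$, which by construction means no inducing path relative to $L$ exists. If $i,j\in NB_t$ share a neighborhood and some $b_k$ lies in $N\setminus NB_t$, then $b_k$ survives as an intermediate collider-and-ancestor node of $\pi$, so $\pi$ is an inducing path relative to $L$ between two nodes of $NB_t$ with an intermediate node in $N\setminus NB_t$, contradicting Assumption~\ref{asp:inp}. Finally, if $i,j\in NB_t$ and every $b_k$ lies in $NB_t$, then every edge of $\pi^*$ is a within-neighborhood edge and hence an edge of $\calG$, so $\pi^*$ is itself a path in $\calG$ all of whose intermediate nodes are colliders and ancestors of an endpoint; that is, $\pi^*$ is an inducing path relative to the empty set in $\calG$ between non-adjacent $i$ and $j$, contradicting the maximality of the DAG $\calG$ viewed as a MAG over $V$. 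In every case we reach a contradiction, so $\calG^*_N$ is maximal and therefore a MAG.

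The main obstacle is the splicing step, and specifically the need to preserve a junction node $b_k\in N\setminus NB_t$ as a genuine intermediate node of the final \emph{simple} inducing path $\pi$: naive concatenation produces a walk, and shortening it to a path must not delete the offending node, or else the contradiction with Assumption~\ref{asp:inp} evaporates. I expect to handle this by taking $\pi^*$ to be a shortest inducing path in $\calG^*_N$ and each backing path minimal, and by invoking the standard collider-junction composition lemma for inducing paths relative to a latent set, carefully tracking the endpoint marks so that a collider is preserved at every junction. The clean separation of roles is worth emphasizing: Assumption~\ref{asp:inp} is used only to rule out junctions escaping into $N\setminus NB_t$, whereas the purely within-neighborhood case collapses to the (trivial) maximality of $\calG$ itself, and the between-neighborhood case follows directly from the construction of $\calG^*_N$.
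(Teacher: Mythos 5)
Your proposal is correct and takes essentially the same route as the paper's proof: ancestrality follows from the observation that every orientation in $\calG^*_N$ respects ancestry in $\calG$, and maximality is proved by contradiction by lifting a hypothetical inducing path in $\calG^*_N$ to an inducing path relative to $L$ in the DAG $\calG$, then contradicting either the construction rule for between-neighborhood pairs or Assumption~\ref{asp:inp}. The splicing step you single out as the main obstacle is precisely the step the paper dispatches with ``it is easy to see that this path corresponds to an inducing path relative to $L$ in $\calG$,'' and your three-case analysis is only an organizational variant of the paper's argument, which instead shows the path must carry a bidirected (hence between-neighborhood) endpoint edge, thereby subsuming your all-within-neighborhood case rather than reducing it to the maximality of $\calG$ as a DAG.
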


Note that $\calG^*_N$ is in general a proper subgraph of, and thus sparser than, the MAG constructed by marginalizing $L$ from the DAG $\calG(V)$, which can have additional edges in a neighborhood due to inducing paths. 
For the DAG in Fig.~\ref{fig:illustrationDAG}, the MAG after marginalizing out $L=\{6,11,12,13\}$ is the graph
in Fig.~\ref{fig:cml} with an additional bidirected edge between $X_1$ and $X_2$.
However, in our construction, since $X_{13}$ is a second-order neighbor of target node $X_3$, this edge ($X_1-X_2$ in Fig.~\ref{fig:cmlskel}) would be deleted in the second skeleton recovery phase, resulting in the graph of Fig.~\ref{fig:cml}.

Due to Lemma~\ref{lm:mag}, the Markov equivalence class of $\calG^*_N$ is represented by a PAG, denoted $[\calG^*_N]$.
Given the CI oracle, which also can be used to perfectly recover the neighbor set, or Mb, of any node, we have the following result for the population version of our algorithm: 
\begin{theorem}\label{thm:pop}
Suppose the joint distribution $P(X_1,\ldots,X_p)$ is faithful to $\calG$ and Assumption~\ref{asp:inp} holds. Given the CI oracle, the graph constructed by Algorithm~\ref{alg:CML} up to the completion of line~\ref{lst:line:fcirules} is the PAG $[\calG^*_N]$.
\end{theorem}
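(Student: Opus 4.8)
The plan is to prove the statement in three parts: (i) the skeleton produced after line~\ref{lst:line:endtargetnbhdskel} coincides with the skeleton of $\calG^*_N$; (ii) the separating sets stored by the algorithm correctly identify the colliders of $\calG^*_N$ when $\mathcal{R}_0$ is applied at line~\ref{lst:line:vstruct}; and (iii) feeding the correct skeleton and unshielded colliders to rules $\mathcal{R}_1$--$\mathcal{R}_4,\mathcal{R}_8$--$\mathcal{R}_{10}$ returns the PAG $[\calG^*_N]$ by the completeness of these rules. Under faithfulness the CI oracle reports exactly $d$-separation in $\calG$, and two classical facts carry most of the work. The first is the inducing path theorem~\cite{richardsonAncestralGraphMarkov2002}: for $i,j\in N$, no subset of $N\setminus\{i,j\}$ $d$-separates $i$ and $j$ in $\calG$ if and only if there is an inducing path between them relative to $L=V-N$. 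The second is a Markov-blanket separation fact for DAGs: if $i,j$ are non-adjacent in $\calG$, then $X_i\independent X_j\mid S$ for some $S\subseteq N_i^1\setminus\{j\}$ or $S\subseteq N_j^1\setminus\{i\}$; concretely one may take $S=pa_\calG(i)$ for whichever endpoint is not a descendant of the other, which exists by acyclicity, and $pa_\calG(i)\subseteq N_i^1\setminus\{j\}$ since non-adjacency excludes $j$.

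For the skeleton I would treat between- and within-neighborhood pairs separately. A between-neighborhood pair $(i,j)\in B$ is only tested in the first phase (lines~\ref{lst:line:globalskel}--\ref{lst:line:endglobalskel}), where separating sets range over subsets of $O=N$, so by the first fact its edge survives exactly when an inducing path relative to $L$ exists, which is precisely the criterion for adding between-neighborhood edges to $\calG^*_N$. For a within-neighborhood pair $i,j\in NB_t$, the first phase deletes the edge whenever $i,j$ are $d$-separable within $N$, so any surviving edge corresponds to an inducing path relative to $L$ whose intermediate nodes avoid $N\setminus NB_t$ by Assumption~\ref{asp:inp}. The second phase (lines~\ref{lst:line:targetnbhd}--\ref{lst:line:endtargetnbhdskel}) then searches separators inside $N_i^1$ or $N_j^1$, which may reach nodes of $L$ adjacent to $i$ or $j$; by the second fact such a set is found exactly when $i,j$ are non-adjacent in $\calG$, so the surviving within-neighborhood edges are precisely the direct edges of $\calG_N$. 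Together these recover the adjacencies of $\calG^*_N$.

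For the orientations I would first record a uniform description of the marks of $\calG^*_N$: on any edge incident to $k$, the mark at $k$ is an arrowhead iff $k\notin \mathrm{an}_\calG(i)$ and a tail iff $k\in \mathrm{an}_\calG(i)$. This holds for the directed within-neighborhood edges inherited from $\calG$ and, by inspection of rules (i)--(iii), for the added between-neighborhood edges (a cycle would otherwise arise). Hence for an unshielded triple $\langle i,k,j\rangle$ (with $i,k$ and $k,j$ adjacent but $i,j$ non-adjacent), $k$ is a collider iff $k\notin \mathrm{an}_\calG(i)\cup\mathrm{an}_\calG(j)$. To match $\mathcal{R}_0$ I would prove the key lemma that whenever $S$ $d$-separates $i,j$ in $\calG$, the restriction $S\cap N$ $m$-separates $i,j$ in $\calG^*_N$; granting it, the standard MAG property (on an unshielded triple a collider lies in no $m$-separator and a non-collider lies in every $m$-separator) yields $k\in S_{ij}\Leftrightarrow k$ is a non-collider, since $k\in N$ makes $k\in S_{ij}$ and $k\in S_{ij}\cap N$ equivalent. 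Finally, since $\calG^*_N$ is a MAG by Lemma~\ref{lm:mag}, the correct skeleton and unshielded colliders determine the PAG, and $\mathcal{R}_1$--$\mathcal{R}_4,\mathcal{R}_8$--$\mathcal{R}_{10}$ orient every invariant mark and no others by completeness~\cite{zhang2008}, so line~\ref{lst:line:fcirules} outputs $[\calG^*_N]$.

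The main obstacle is exactly the key lemma just invoked, namely reconciling the fact that the oracle reports $d$-separation in $\calG$ (equivalently $m$-separation in the \emph{full} latent-projection MAG $\mathcal{M}$ over $N$) with the requirement that the output match the strictly sparser graph $\calG^*_N$. The two disagree precisely on within-neighborhood edges created by inducing paths through $L$ (for example $X_1\leftrightarrow X_2$, present in $\mathcal{M}$ but absent from $\calG^*_N$), and indeed a separator such as $S_{12}=\{13\}$ relies on a latent node and fails to $d$-separate once restricted to $N$, so I cannot simply transfer $m$-separation from $\mathcal{M}$. The argument must instead lift an $m$-connecting path in $\calG^*_N$ given $S\cap N$ to a $d$-connecting path in $\calG$ given $S$ by expanding each $\calG^*_N$-edge into its underlying edge or inducing path while checking that collider and non-collider status is preserved; Assumption~\ref{asp:inp} is what guarantees that the within-neighborhood edges deleted in the second phase are never required to rebuild such a path, so that passing from $\mathcal{M}$ to $\calG^*_N$ introduces no spurious $m$-connection. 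Carrying out this preservation check, together with the analogous verification that $S_{ij}$ remains a valid separator for the discriminating-path rule $\mathcal{R}_4$, is the technical heart of the proof.
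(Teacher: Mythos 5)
Your proposal follows essentially the same route as the paper's proof: recover the skeleton of $\calG^*_N$ in two phases (using the inducing-path characterization for between-neighborhood pairs and Markov-blanket/parent separators for within-neighborhood pairs), then argue that the FCI rules $\mathcal{R}_0$--$\mathcal{R}_4,\mathcal{R}_8$--$\mathcal{R}_{10}$ yield $[\calG^*_N]$ because the stored separating sets, restricted to $N$, remain valid $m$-separators in $\calG^*_N$. The ``key lemma'' you flag as the technical heart is precisely the claim the paper handles with its one-sentence disconnection argument (paths blocked by second-order neighbors $Z=S_{ij}\setminus S'_{ij}$ become disconnected in $\calG^*_N$ once $Z$ is removed), so your treatment is, if anything, more explicit about where the real work lies.
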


It is straightforward to establish structure learning consistency for the sample version of our algorithm when the CI oracle is replaced by pointwise consistent CI tests. Let $\mathcal{T}_n$ be a test for $H_0: \theta\in\Theta_0$ versus the alternative $H_a: \theta\in \Theta_1$,
and denote by $\alpha(\mathcal{T}_n\mid\theta_0)$, $\theta_0\in\Theta_0$, and $\beta(\mathcal{T}_n\mid\theta_1)$, $\theta_1\in\Theta_1$,
its type I and type II error probabilities, respectively. 
We say $\mathcal{T}_n$ is pointwise consistent if
\begin{align*}
\lim_{n\to\infty}\alpha(\mathcal{T}_n\mid\theta_0)=0 \quad\text{and}\quad \lim_{n\to\infty}\beta(\mathcal{T}_n\mid\theta_1)=0
\end{align*}
for all $\theta_0\in\Theta_0$ and $\theta_1\in\Theta_1$.
Denote by $\widehat G_n$ the graph constructed by Algorithm~\ref{alg:CML} up to the completion of line~\ref{lst:line:fcirules} given a sample of size $n$ from $P(X_1,\ldots,X_p)$.
\begin{theorem}\label{thm:consistency}
Suppose the joint distribution $P(X_1,\ldots,X_p)$ is faithful to $\calG$ and Assumption~\ref{asp:inp} holds. Perform all CI checks in Algorithm~\ref{alg:CML} using pointwise consistent CI tests. Then, $P(\widehat G_n=[\calG^*_N])\to 1$ as $n\to \infty$.
\end{theorem}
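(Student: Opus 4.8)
The plan is to reduce the sample version to the already-established oracle version of Theorem~\ref{thm:pop}. The observation driving the argument is that Algorithm~\ref{alg:CML} interacts with the data only through a sequence of conditional independence decisions: the oracle version and the sample version share identical control flow and identical edge-deletion and orientation logic, differing solely in how each CI query is answered. Consequently, on any sample realization for which every CI query is answered exactly as the oracle would answer it, the sample algorithm traces the same execution path and returns the same graph as the oracle version, which by Theorem~\ref{thm:pop} equals $[\calG^*_N]$. It therefore suffices to show that the probability of this ``all-queries-correct'' event tends to $1$.

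First I would make the universe of relevant CI queries explicit and finite. Since $p$ is fixed, the set $N=NB_T$ together with the second-order neighbors is finite, and every CI test the algorithm can possibly invoke has the form $X_i \independent X_j \mid X_S$ for some ordered pair $(i,j)$ and some conditioning set $S$ drawn from a finite collection of candidate subsets (subsets of $O\setminus\{i,j\}$ in the first phase, and subsets of $N_i^1\setminus\{j\}$ or $N_j^1\setminus\{i\}$ in the second). Let $\mathcal{Q}$ denote the resulting finite set of all such triples $(i,j,S)$, including the finitely many CI queries used to recover the neighbor sets. Each $q\in\mathcal{Q}$ has a determinate oracle answer (independence or dependence) under faithfulness to $\calG$, and answering $q$ with a pointwise consistent test corresponds to a hypothesis test whose null is the independence statement.

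Next I would apply pointwise consistency query-by-query and take a union bound. For a fixed query $q$ whose oracle answer is ``independent,'' the probability the test rejects is a type I error probability $\alpha(\mathcal{T}_n\mid\theta_0)\to 0$; for a fixed query whose oracle answer is ``dependent,'' the probability of failing to reject is a type II error probability $\beta(\mathcal{T}_n\mid\theta_1)\to 0$. Hence for every $q\in\mathcal{Q}$ the probability of an incorrect decision tends to $0$, and since $|\mathcal{Q}|$ is a finite constant independent of $n$, the union bound gives that the probability that some query in $\mathcal{Q}$ is decided incorrectly also tends to $0$. Equivalently, the probability that all queries in $\mathcal{Q}$ are decided correctly tends to $1$, and on this event $\widehat G_n=[\calG^*_N]$ by the reduction above.

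The point requiring care, and the step I expect to be the main obstacle, is the adaptivity of the algorithm: which CI tests are actually executed depends on the outcomes of earlier tests, so the set of executed tests is itself random, and a union bound over ``the tests the algorithm runs'' is ill-posed. The resolution is precisely the choice of $\mathcal{Q}$ above as the full, deterministic universe of possible queries rather than the random set of executed ones. On the event that every query in $\mathcal{Q}$ is answered correctly, all randomness is removed from the algorithm's decisions simultaneously, so its execution path is pinned to the oracle path regardless of the order in which tests are invoked; this sidesteps the adaptivity entirely. A secondary subtlety is that the neighbor-set estimates feeding the second phase and rule $\mathcal{R}_N$ must themselves be correct, which is handled by having included the Mb-recovery queries in $\mathcal{Q}$, so their correct resolution is already part of the same high-probability event.
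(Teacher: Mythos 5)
Your proposal is correct and matches the argument the paper intends: the paper itself dismisses this result as ``straightforward'' once Theorem~\ref{thm:pop} is in hand, relying exactly on the reduction you describe---finitely many possible CI queries for fixed $p$, each decided correctly with probability tending to one by pointwise consistency, a union bound, and the observation that on the all-queries-correct event the sample algorithm reproduces the oracle output $[\calG^*_N]$. Your explicit handling of adaptivity (bounding over the deterministic universe of possible queries rather than the random set of executed ones) and your inclusion of the Mb-recovery queries are careful touches that the paper leaves implicit.
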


With these results, we have shown that our algorithm is sound and complete. When the sample version of the algorithm is used with consistent CI tests, we have consistency with respect to the Markov equivalence class of the ground truth $\calG_N^*$. These results will hold for any data distribution as long as a consistent CI test is used in the CML algorithm. 

To illustrate Assumption~\ref{asp:inp}, consider the DAG in Fig.~\ref{fig:illustrationDAG}. The path $\pi=\langle 1,13,2 \rangle$ is an inducing path relative to $L=\{6,11,12,13\}$ between two nodes in the same target neighborhood, $NB_{3}$. However, the only intermediate node $X_{13}$ on the path is in $L$ and not $NB_8$, the assumption holds. This allows us to remove the edge between nodes 1 and 2 using the second-order neighbor node 13. For a different path, consider $\pi=\langle 4,6,11,9,12,2 \rangle$. Here, node 9 is an intermediate node on the path belonging to a different target neighborhood, $NB_8$, than the endpoints which both belong to $NB_3$. However, since node 9 is not an ancestor of either node 4 or node 2, this is not an inducing path. Thus, nodes 2 and 4 may be properly separated by node 3. Using this reasoning, it is evident that neither of the two paths violates Assumption~\ref{asp:inp}. Indeed, this is a weak assumption which will only be violated in rare, pathological cases.


Let $f(k)$ and $g(k)$ be the respective computational complexities of the FCI and the PC algorithms on a $k$-node problem. Then the computational complexity of Algorithm~\ref{alg:CML} is bounded by $f(|N|)+\sum_{t\in T} g(|NB_t|)$. 
In the worst case, the computational complexity of the FCI and the PC algorithm is exponential in the number of nodes \cite{Spirtes2000,pmlr-vR3-spirtes01a}. Assuming sparsity in the underlying DAG, the PC algorithm improves to  polynomial complexity \cite{kalischEstimatingHighDimensionalDirected}. Clearly, our local algorithm will achieve substantial computational savings when $|N|\ll p$ compared to applying the PC algorithm on all the $p$ nodes. This will be further demonstrated with our numerical comparisons.

\subsection{Consistency for the Gaussian Case}

\noindent As a special case, we develop structure learning consistency results for the CML algorithm assuming Gaussian linear structural equation model associated with the DAG $\calG$,  
\begin{equation}\label{eq:Gaulsem}
X_j = \sum_{i \in pa_{\calG}(j)} \beta_{ij}X_i + \varepsilon_j, \quad j \in [p],  
\end{equation}
where $\varepsilon_j$ are independent Gaussian errors.

For any pair of nodes $i,j$ and potential separating set $S$ (which may be empty), we can test whether or not  $X_i$ and $X_j$ are conditionally independent given $X_S$ by carrying out the hypothesis test $H_0: \rho_{i,j\mid S} = 0$, where $\rho_{i,j\mid S}$ is the partial correlation between $X_i$ and $X_j$ given $X_S$.
To test this, we apply Fisher's z-transformation to obtain the test statistic 
\[Z(i,j;S) = \frac{1}{2}\log\left(\frac{1+\hat{\rho}_{i,j\mid S}}{1-\hat{\rho}_{i,j\mid S}}\right),\]
where $\hat{\rho}_{i,j\mid S}$ is the sample correlation. Then, a p-value is calculated using the approximation $\sqrt{n-|S|-3}[Z(i,j;S)] \sim N(0,1)$ when $n$ is large \cite{kalischEstimatingHighDimensionalDirected}.

\begin{corollary}\label{cor:Gauconsistency}
Suppose the joint distribution of $X_1,\ldots,X_p$ is given by the Gaussian linear SEM~\eqref{eq:Gaulsem} and is faithful to $\calG$. Further assume that Assumption~\ref{asp:inp} holds. Perform all CI checks in Algorithm~\ref{alg:CML} by the Fisher's z-test on partial correlations with significance level $\alpha_n$. Then, there exists $\alpha_n\to 0$, such that 
$P(\widehat G_n=[\calG^*_N])\to 1$ as $n\to \infty$.
\end{corollary}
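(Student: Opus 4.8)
The plan is to obtain Corollary~\ref{cor:Gauconsistency} as a direct specialization of Theorem~\ref{thm:consistency}, which already guarantees $P(\widehat G_n=[\calG^*_N])\to 1$ whenever every CI check is performed with a pointwise consistent test. The whole task therefore reduces to exhibiting a sequence $\alpha_n\to 0$ for which the Fisher z-test on partial correlations is pointwise consistent for each hypothesis $H_0:\rho_{i,j\mid S}=0$ against $H_a:\rho_{i,j\mid S}\neq 0$ that the algorithm may encounter. Since the vertex set $V=[p]$ is fixed and finite, there are only finitely many triples $(i,j,S)$ to consider, so it suffices to control the type I and type II errors of a single generic test and then select one common $\alpha_n$ that works for all of them simultaneously. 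Throughout I would rely on the fact that, under the Gaussian linear SEM~\eqref{eq:Gaulsem}, the sample covariance matrix is consistent and partial correlations are smooth functions of its entries, so $\hat\rho_{i,j\mid S}\xrightarrow{p}\rho_{i,j\mid S}$ by the continuous-mapping theorem.

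First I would handle the type I error. Under $H_0$ we have $\hat\rho_{i,j\mid S}\xrightarrow{p}0$, and the transformed statistic satisfies $\sqrt{n-|S|-3}\,Z(i,j;S)\xrightarrow{d}N(0,1)$; in particular $\sqrt{n-|S|-3}\,|Z(i,j;S)|$ is tight. The test rejects exactly when this quantity exceeds the critical value $z_{\alpha_n/2}:=\Phi^{-1}(1-\alpha_n/2)$. Because $\alpha_n\to 0$ forces $z_{\alpha_n/2}\to\infty$, the probability that a tight sequence exceeds a diverging threshold tends to $0$, so $\alpha(\mathcal{T}_n\mid\theta_0)\to 0$.

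Next I would bound the type II error, which is the more delicate step and the main obstacle. Under $H_a$ the sample partial correlation converges in probability to the true value $\rho^*:=\rho_{i,j\mid S}\neq 0$, and since Fisher's transform $g(\rho)=\tfrac12\log\frac{1+\rho}{1-\rho}$ is continuous with $g(\rho)=0$ only at $\rho=0$, we obtain $Z(i,j;S)\xrightarrow{p}g(\rho^*)\neq 0$. Consequently $\sqrt{n-|S|-3}\,|Z(i,j;S)|$ diverges at rate $\sqrt{n}$, while the threshold obeys $z_{\alpha_n/2}=O\!\left(\sqrt{\log(1/\alpha_n)}\right)$ by the Gaussian tail bound. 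The test fails to reject precisely when the statistic falls below $z_{\alpha_n/2}$, so $\beta(\mathcal{T}_n\mid\theta_1)\to 0$ as long as $z_{\alpha_n/2}=o(\sqrt{n})$, equivalently $\log(1/\alpha_n)=o(n)$. The crux is thus reconciling the two demands on $\alpha_n$: it must vanish to kill the type I error, yet not vanish so fast that the critical value outpaces the $\sqrt{n}$ growth of the statistic under the alternative. Any sequence with $\alpha_n\to 0$ and $\log(1/\alpha_n)=o(n)$—for instance $\alpha_n=n^{-1}$—meets both requirements, and such a sequence plainly exists.

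Finally I would assemble the pieces: for this common choice of $\alpha_n$, every CI test invoked by Algorithm~\ref{alg:CML} is pointwise consistent (type I and type II errors vanish for each of the finitely many triples $(i,j,S)$). Theorem~\ref{thm:consistency} then applies verbatim and yields $P(\widehat G_n=[\calG^*_N])\to 1$ as $n\to\infty$, completing the argument.
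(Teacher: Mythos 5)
Your proposal is correct and follows the route the paper intends: Corollary~\ref{cor:Gauconsistency} is obtained by verifying that the Fisher z-test with a suitably chosen level $\alpha_n \to 0$ (e.g., any $\alpha_n$ with $\log(1/\alpha_n) = o(n)$, such as $\alpha_n = n^{-1}$) is a pointwise consistent CI test in the sense defined before Theorem~\ref{thm:consistency}, and then invoking that theorem. Your handling of both error types --- tightness of $\sqrt{n-|S|-3}\,Z$ under $H_0$ against a diverging threshold, and the $\sqrt{n}$ divergence of the statistic under $H_a$ against a threshold of order $\sqrt{\log(1/\alpha_n)}$ --- together with the observation that only finitely many triples $(i,j,S)$ arise for fixed $p$, is exactly what is needed.
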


In addition to the classical asymptotic setting, we will also prove the consistency of our method in the high-dimensional setting, where the DAG $\calG=\calG_n$ with dimension $p=p_n$ now can grow with the sample size $n$. Denote by $N_n$ the union of the target neighborhoods and let $N_n^2$ be the set of second-order neighbors of the target nodes. The union of the target neighborhoods and the second-order neighbors is denoted $N_n^{1,2} = N_n \cup N_{n}^2$. Let $D_n = |N_n^{1,2}|$ be the size of the set $N_n^{1,2}$. 
In the high-dimensional setting, it is common to set a maximum size $m_n$ for the candidate separating sets in constraint-based learning. Let $\widehat{G}_n(m_n)$ be the graph generated from Algorithm~\ref{alg:CML} after completion of line~\ref{lst:line:fcirules} using significance level $\alpha_n$ and maximum separating set size $m_n$.

\begin{assumption}\label{asp:high-dim} We make the following assumptions for the high-dimensional Gaussian case. 
    \begin{itemize}
        \item[(A1)] The distribution $P(X_1,\ldots,X_p)$ is defined by the Gaussian linear SEM~\eqref{eq:Gaulsem} and is faithful to $\calG_n$.
        \item[(A2)] The neighborhood union size $D_n = O(n^a)$ for some $a \geq 0$.  
        \item[(A3)] Let $\text{m-Sep}_{\calG^\ast_{N_n}}(i,j)$ be the smallest set that $m$-separates non-adjacent nodes $i$ and $j$ in the MAG $\calG^\ast_{N_n}$, $\ell_n = \sup_{i,j}|\text{m-Sep}_{\calG^\ast_{N_n}}(i,j)|$, 
        and $\nu_n = \sup\{|adj_{\calG_n}(i)|: i \in N_n\}$. We assume $q_n := \max(\ell_n,\nu_n)= O(n^{1-b})$ for some $0 < b < 1$.
        \item[(A4)] For any pair of distinct nodes $i,j \in N_n$ and subset $S \subseteq N_n^{1,2} \setminus \{i,j\}$ with size $|S|\leq m_n$, let $\rho_{i,j\mid S}$ be the partial correlation between ${X}_i$ and ${X}_j$ given $\{{X}_k : k \in S\}$. 
        We assume
        \[
        \sup_{i,j,S} |\rho_{i,j \mid S}| \leq M < 1
        \]
        for some constant $M$ and
        \[
        \inf\{|\rho_{i,j \mid S}|: \rho_{i,j\mid S} \neq 0 \}  \geq c_n,
        \]
        where $c_n^{-1} = O(n^d)$ for some $0 < d < b/2$ and $b\in(0,1)$ is defined in (A3). 
        
        \item[(A5)] As $n \to \infty$, the estimated output from the Mb recovery algorithm, $\widehat{N}_n^{1,2}$, fulfills
        \begin{align*}
        P\left[ \widehat{N}_n^{1,2} = N_n^{1,2} \right] &\geq 1 - O(|N_n|\exp(-C_{Mb}n^{1-\delta})),
        \end{align*}
        where $0 < \delta < 1$ and $0 < C_{Mb} < \infty$. 
    \end{itemize}
\end{assumption}

\begin{theorem}\label{thm:pagconsistency} 
Suppose Assumption~\ref{asp:inp} and (A1)-(A5) of Assumption~\ref{asp:high-dim} all hold. Let $\calP_{N_n}^\ast = [\calG_{N_n}^\ast]$ be the PAG representing the equivalence class of the MAG $\calG_{N_n}^\ast$ defined above. 
If $q_n\leq m_n=O(q_n)$, then there exists some significance level $\alpha_n \to 0$ such that  
\begin{align*}
    &P\Big[ \widehat{G}_n(m_n) = \calP_{N_n}^\ast \Big] = \\
    &\quad\quad 1 - O\left( \exp(-C n^{1-2d})\right) - O\left(\exp(-C_{M}n^{1-\delta})\right)\to 1
\end{align*}

as $n \to \infty$ for some positive constants $C$ and $C_M$. 
\end{theorem}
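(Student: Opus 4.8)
The plan is to reduce the theorem to controlling the probability that some conditional independence test carried out during the run of Algorithm~\ref{alg:CML} disagrees with the CI oracle, and then to bound that probability by a union bound over all tests combined with a concentration inequality for sample partial correlations. First I would observe that, conditional on two events, the sample output is deterministically correct. Let $A_{\mathrm{Mb}}$ be the event $\{\widehat N_n^{1,2}=N_n^{1,2}\}$ that the Markov blanket recovery returns the true first- and second-order neighbor sets, and let $A_{\mathrm{CI}}$ be the event that every Fisher z-test invoked by the algorithm returns the same accept/reject decision as the population CI oracle. On $A_{\mathrm{Mb}}\cap A_{\mathrm{CI}}$ the algorithm executes exactly the population version analyzed in Theorem~\ref{thm:pop}: the same edges are deleted, the same separating sets are stored, and the same orientation rules fire. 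Since (A1) supplies faithfulness to $\calG_n$ and Assumption~\ref{asp:inp} holds, Theorem~\ref{thm:pop} then gives $\widehat G_n(m_n)=[\calG_{N_n}^\ast]=\calP_{N_n}^\ast$ on this event. Hence
\[
P\big[\widehat G_n(m_n)=\calP_{N_n}^\ast\big]\ \ge\ 1-P(A_{\mathrm{Mb}}^c)-P(A_{\mathrm{CI}}^c),
\]
and (A5) already bounds $P(A_{\mathrm{Mb}}^c)$. It remains to bound $P(A_{\mathrm{CI}}^c)$.

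For this I rely on the structure of the test. All conditioning sets $S$ queried by the algorithm are subsets of $N_n^{1,2}$ of size at most $m_n$, since both skeleton phases search within first- and second-order neighbors. I bound a single test's error by the standard route for partial correlations: because $|\rho_{i,j\mid S}|\le M<1$ by (A4), the Fisher transform $g(\rho)=\tfrac12\log\frac{1+\rho}{1-\rho}$ is Lipschitz on the relevant range, so a deviation of the test statistic translates into a deviation of $\widehat\rho_{i,j\mid S}$ from $\rho_{i,j\mid S}$. A concentration inequality for sample partial correlations of Gaussian vectors \cite{kalischEstimatingHighDimensionalDirected} then yields, uniformly over $(i,j,S)$ with $|S|\le m_n$, a bound of the form $P(|\widehat\rho_{i,j\mid S}-\rho_{i,j\mid S}|>\gamma)\le O\big((n-m_n)\exp(-C(n-m_n)\gamma^2)\big)$. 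Choosing the significance level $\alpha_n$ so that the rejection threshold sits between $0$ and the minimum signal $c_n$ of (A4), concretely taking a threshold of order $c_n$, separates the null from the alternative and controls both type~I and type~II error by $O\big(n\exp(-C(n-m_n)c_n^2)\big)$.

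Next I would assemble the union bound. The number of pairs is $O(D_n^2)$ and, for each pair, the number of candidate separating sets of size at most $m_n$ is at most $\binom{D_n}{m_n}=O(D_n^{m_n})$, so the total number of tests is $O(D_n^{m_n+2})$. Multiplying by the single-test bound and taking logarithms, the exponent is
\[
(m_n+2)\log D_n+\log n-C(n-m_n)c_n^2 .
\]
Here (A2) gives $\log D_n=O(\log n)$, (A3) gives $m_n=O(q_n)=O(n^{1-b})=o(n)$, and (A4) gives $c_n^{-1}=O(n^d)$ with $d<b/2$. Thus the positive contribution is of order $n^{1-b}\log n$ while the negative contribution is of order $(n-m_n)c_n^2=\Omega(n^{1-2d})$; since $2d<b$ implies $1-2d>1-b$, the exponential term dominates and $P(A_{\mathrm{CI}}^c)=O(\exp(-Cn^{1-2d}))$.

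Finally I would combine the two bounds. Since $|N_n|\le D_n=O(n^a)$ is polynomial in $n$, the factor $|N_n|$ appearing in (A5) is absorbed into the exponential at the cost of a slightly smaller constant, giving $P(A_{\mathrm{Mb}}^c)=O(\exp(-C_M n^{1-\delta}))$, which yields the stated rate. The main obstacle is the rate balance in the third paragraph: one must verify that the combinatorial explosion $D_n^{m_n}$ from searching over all separating sets of size up to $m_n$ is genuinely dominated by the concentration term, and this is exactly where the quantitative conditions $d<b/2$ and $m_n=O(q_n)$ are used. A secondary point requiring care is that the choice of $\alpha_n$ must be uniform across all tests and independent of the particular $(i,j,S)$, so the threshold must be tied to the global lower bound $c_n$ rather than to any individual partial correlation.
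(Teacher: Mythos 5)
Your proposal is correct and follows essentially the same route as the paper's proof (given in its Supplementary Material): condition on the Markov-blanket recovery event from (A5) and on the event that all Fisher z-tests agree with the CI oracle, reduce to the population result of Theorem~\ref{thm:pop}, and bound the CI-error event by a union bound over the $O(D_n^{m_n+2})$ tests combined with the Kalisch--B\"uhlmann concentration bound for sample partial correlations, with the rate balance $1-2d>1-b$ from (A3)--(A4) making the exponential term dominate. The only point to state a bit more explicitly is that the condition $q_n\leq m_n$ is what guarantees the \emph{capped} oracle algorithm (separating sets restricted to size $\leq m_n$) still outputs $[\calG^*_{N_n}]$, since (A3) ensures every non-adjacent pair has a separating set of size at most $\ell_n\leq q_n\leq m_n$ and all searched adjacency sets have size at most $\nu_n\leq m_n$; your sketch uses $m_n=O(q_n)$ only for the test-count bound, but both halves of the condition are needed.
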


Assumption (A3) is a sparsity condition over the target neighborhoods and the second-order neighbors to bound the complexity of both skeleton recovery phases. Assumption (A4) is a local version of strong faithfulness, as it is restricted to $N_n^{1,2}$ instead of the entire graph $G_n$. Both (A3) and (A4) significantly relax the sparsity and strong faithfulness conditions for corresponding global algorithms such as the PC. Finally, in (A5) we assume the Mb recovery algorithm accurately recovers the first- and second-order neighbors of the target nodes after $|N_n|$ runs: once for each target node and each first-order neighbor. It is drawn from a similar statement in Section 2 of \cite{meinshausen} for neighborhood estimation using the Lasso. Note that we make no specific assumptions for the number of nodes $p_n$ of the entire DAG, which may be much larger than $D_n$. However, the order of $p_n$ is relevant implicitly for the Mb accuracy assumption in (A5).

\section{Simulation Experiment Results}
\label{sec:empiricalanalysis}

\noindent The algorithm we have discussed is flexible such that it can be used for any type of data. 
For our simulations, however, we assume a linear SEM with independent Gaussian errors such as we find in equation~\eqref{eq:Gaulsem}. For our analysis, we simulate data from 13 networks provided by the \textbf{bnlearn} network repository \cite{jstatsoft09}. The SEM coefficients are drawn from the Unif$(0.4,0.75)$ distribution, multiplied by a sign term with equal probability of being positive or negative. We also generate error terms from $N(0,\sigma_j^2)$ with the standard deviation $\sigma_j$ drawn from Unif$(0.1,0.5)$. With these parameters, we randomly generate data sets of size $n \in \{500, 1000, 10000\}$. To estimate Mbs and obtain first- and second-order neighbors for each target, we use the MMPC algorithm from the \textbf{MXM} package \cite{mmpc}. For our analysis, we use an augmented MMPC algorithm to estimate the first- and second-order neighbors for each target node, applying an additional CI test to recover spouses belonging to the first-order neighbor sets; see the Supplementary Material for more details. 
For both the Mb estimation and skeleton recovery steps, we use significance levels $(\alpha_{Mb},\alpha_{skel}) \in \{0.01,0.05,0.1\}^2$, where $\alpha_{Mb}$ is used for the Mb recovery algorithm and $\alpha_{skel}$ is the significance level for the CI tests in CML skeleton recovery. We use the sample partial correlation as our statistic for CI testing. The elements described above together form a unique simulation setting, which is composed of a network, data set size, Mb estimation algorithm, and significance level pair. For each setting, three data sets are generated with unique randomly drawn coefficients and error variances.

For each network, we randomly select a set of target sets of varying cardinality ranging from two to four nodes from the entire vertex set. These target sets are used in each simulation setting for the network. After generating each data set, we ran the global PC algorithm one time, using the implementation from the \textbf{pcalg} library \cite{pcalg}. For additional comparison on each target set, we also ran the Single Neighborhood Learning (SNL) algorithm along with CML. One may consider SNL an augmented version of the global Grow-Shrink \cite{margaritis_mb} applied to the subgraph over the neighborhoods of multiple targets, and for our purposes represents the extension of any single Mb algorithm to the multi-neighborhood problem. The SNL applies the PC algorithm to each neighborhood individually, without any coordination between disjoint neighborhoods. Fig.~\ref{fig:singlenbhd} provides the output of SNL for the example in Fig.~\ref{fig:illustrationDAG}; see the Supplementary Material for more details. 
Both SNL and CML are applied on all data sets, and once for each target set. We also filter our results by the number of nodes in target neighborhoods and the number of estimated edges to ensure that we are only considering relatively sparse, smaller neighborhoods. In the analysis below, we only use target sets such that the total number of nodes under consideration, according to the subgraph of the ground truth CPDAG, is in the set $[8,20]$, and the number of edges is in the set $[3,20]$. We further filter the simulations to only include $\alpha_{Mb}=0.01$ since this was the optimal choice for MMPC across different data sets according to F1 score with respect to the true Mb in the underlying DAG.

Let $G'$ be the CPDAG of DAG $G$, and $G'_{NB_T}$ be the subgraph of $G'$ over $NB_T$ for a given target set $T$. For each estimated setting and target set, we compare the estimated graph to $G'_{NB_T}$, since this represents the maximal amount of information which is not underdetermined and in principle recoverable by a global structure learning algorithm, say the PC algorithm, to which we are comparing our local method. We consider this to be the ground truth against which we measure the performance of our algorithms. In order to compare our local methods with a suitable global competitor, we also measure the subgraph of the PC algorithm output over $NB_T$ against the ground truth.

\subsection{Overall Accuracy and Runtime}

\noindent The overall F1 score measures how well each estimated graph precisely conforms to the ground truth graph. The overall F1 score measures how well the edge set of the estimated graph precisely conforms to that of the ground truth graph, and is given by $\text{F1} = \frac{2TP}{2TP + FP + FN + IO}$, where $TP$, $FP$, and $FN$ are the number of true positives, false positives, and false negatives, respectively. The last term, $IO$, denotes the number of edges in the estimated graph which have incorrect orientation with respect to the ground truth graph. These errors are distinct from false positives and false negatives because the adjacency relations are still correct. In Fig.~\ref{fig:overallf1}, we compare our method to the global PC algorithm as well as to SNL applied to each target node. The overall F1 score measures the F1 score for all the edges in the estimated graph compared to $G'_{NB_T}$. In terms of overall performance, the CML algorithm is superior to the other methods in both smaller (top panels, $p<100$) and larger networks (lower panels, $p\geq 100$). For example, the median F1 score of the CML algorithm is 159\% higher than that of the global PC algorithm, using significance level of 0.01. The same conclusion is drawn if using the Structural Hamming Distance (SHD) as the measure of structure learning accuracy. See the Supplementary Material for detailed comparisons of the algorithms with respect to SHD.

\begin{figure}[!t]
\centering
\includegraphics[width=3in]{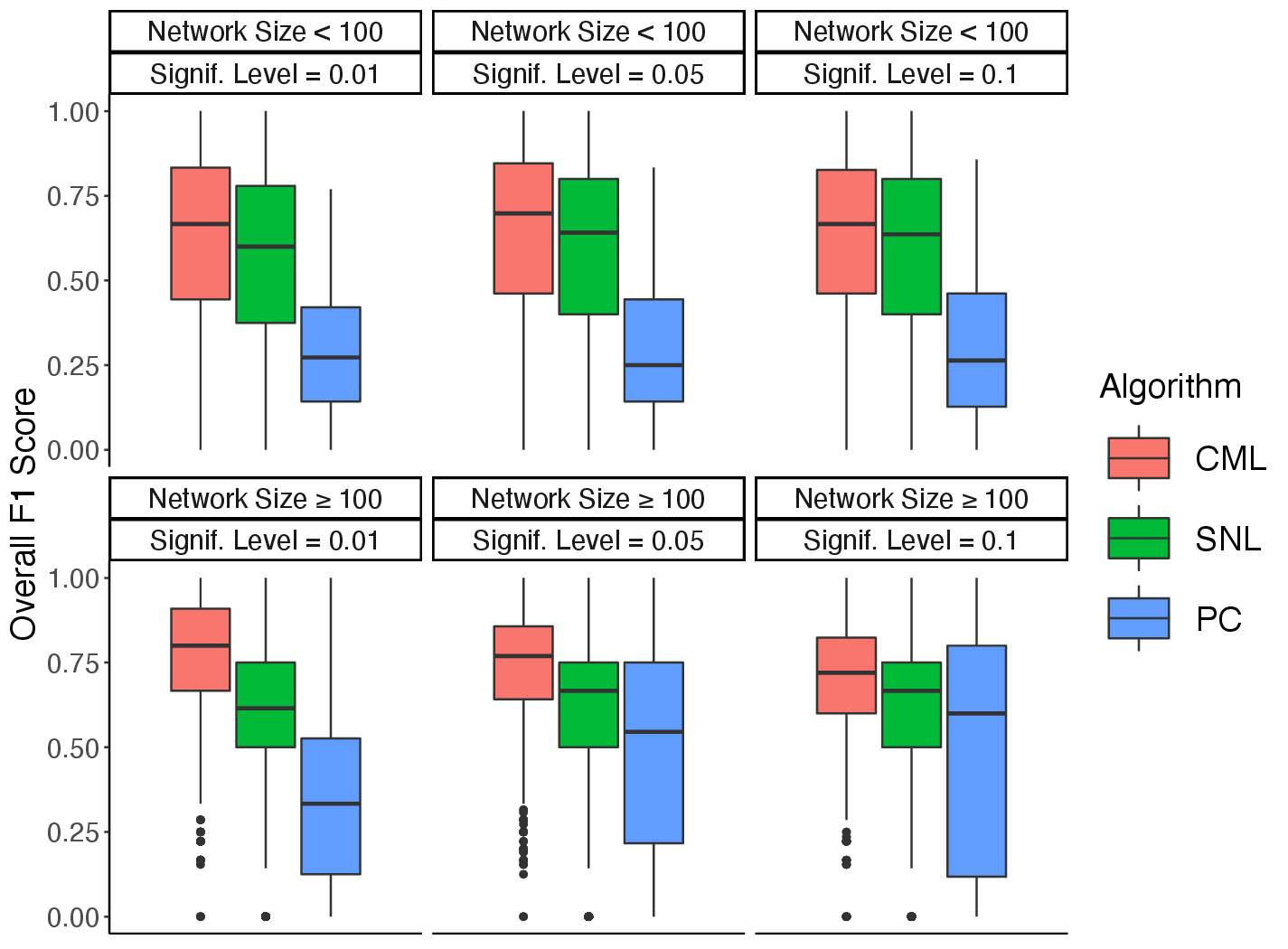}
\caption{Comparisons between the global and local algorithms with respect to accuracy by providing the distributions of F1 scores for different combinations of network size category and skeleton CI test significance levels.}
\label{fig:overallf1}
\end{figure}

\begin{figure*}[!t]
    \centering
    \subfloat[]{
    \includegraphics[width=3in]{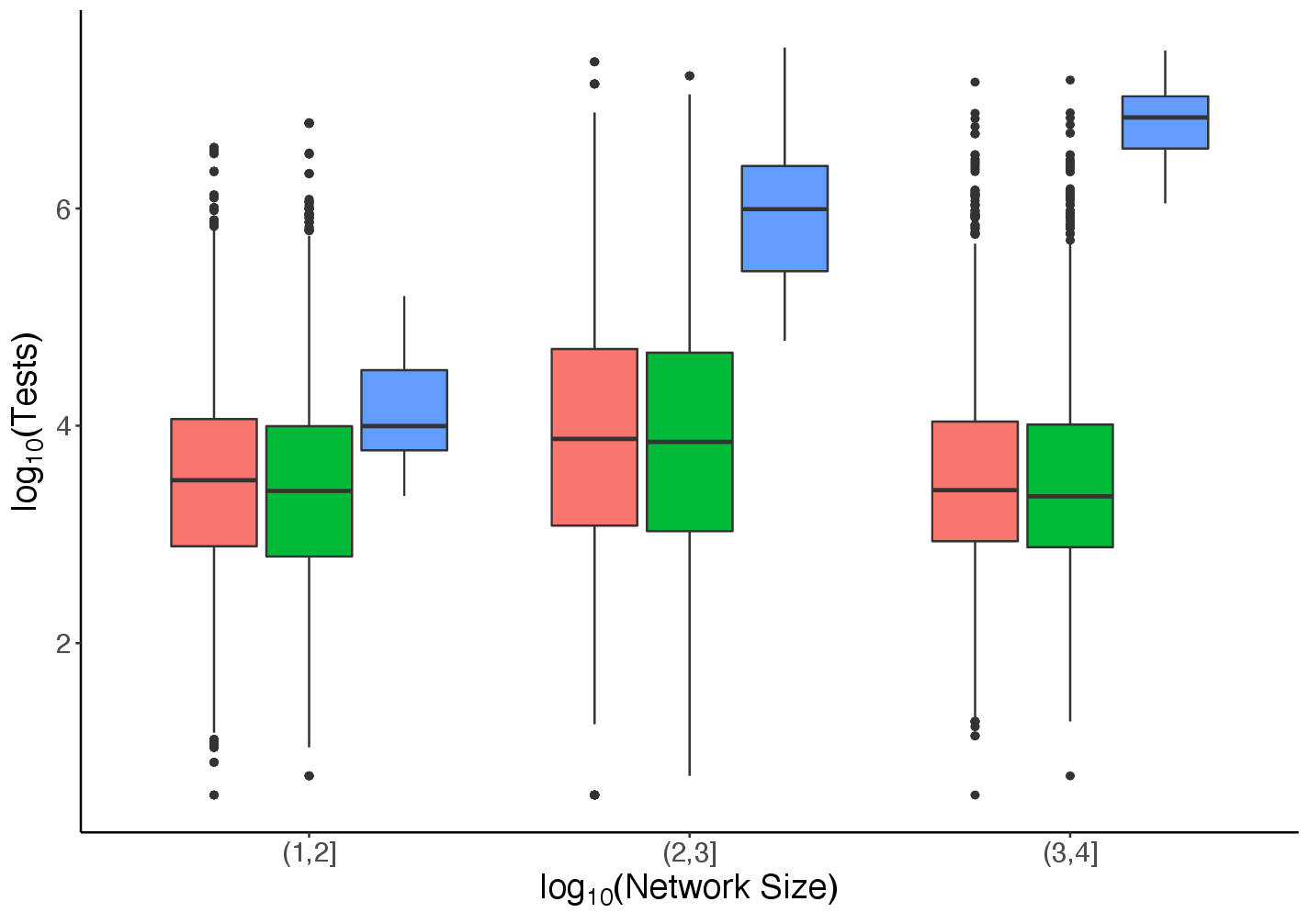}%
    \label{fig:testplot}
    }
    \hfil
    \subfloat[]{
    \includegraphics[width=3in]{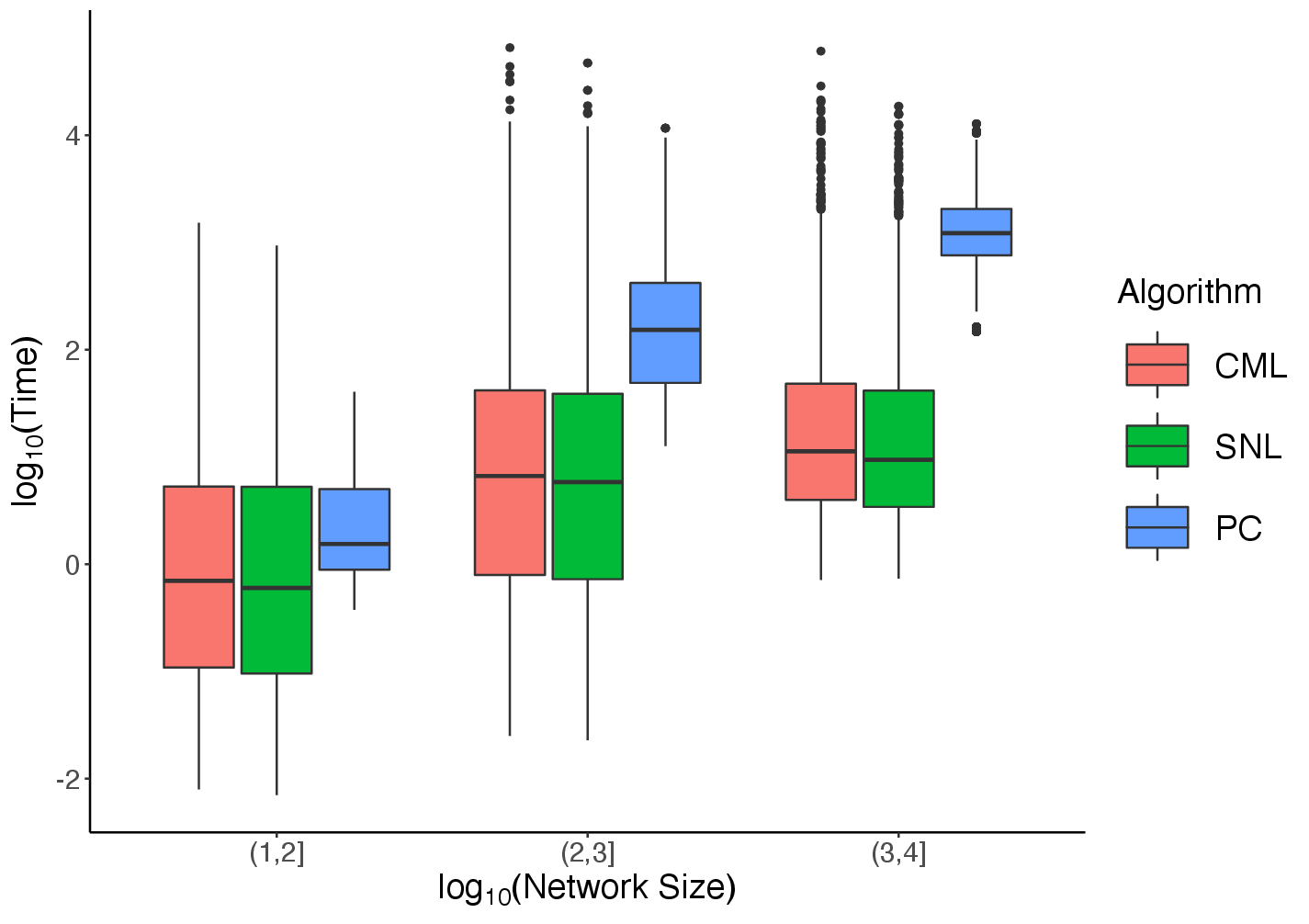}%
    \label{fig:timeplot}
    }
    \caption{Comparisons between the global and local algorithms with respect to complexity. (a) The distributions of the number of CI tests used by each algorithm for different network sizes on a log scale. (b) The distributions of runtime 
for different combinations of network size category and skeleton CI test significance levels on a log scale.}
\label{fig:complexityandruntime}
\end{figure*}

Runtime improvement is perhaps the most significant contribution of local methods to causal discovery. Since the computation of all three algorithms is dominated by CI tests, we use the total number of tests performed as a metric for computational cost. As demonstrated in Fig.~\ref{fig:testplot}, the number of CI tests executed sharply decreases for the local algorithms when compared to the PC algorithm. For networks with more than 100 nodes, we observe a reduction of nearly two orders of magnitude in the median number of CI tests. A similar pattern of improvement is observed in the actual runtime of the complete algorithms, as shown in Fig.~\ref{fig:timeplot}.

We must note, however, that the number of CI tests reported for the local algorithms does not include the number of tests required to estimate the Mb, since we were unable to obtain these values. This consideration does not substantially change our conclusions. Assume that the complexity of a Mb recovery algorithm is $O(|N|p)$, the complexity for the Grow-Shrink algorithm, where $N$ is the union of the neighborhood sets and $p$ is the size of the network. Assuming that $|N|$ is bounded by a constant, visual inspection allows us to safely conclude that the difference in the number of CI tests between the local and global algorithms will not be substantially altered by the addition of Mb recovery CI tests, as the distribution of tests for the PC algorithm in Fig.~\ref{fig:testplot} is clearly much greater than $\log p$. Also, the runtime statistics in Fig.~\ref{fig:timeplot} do include the Mb recovery time, and we still observe a similar pattern of improvement. However, the PC algorithm is slowed down to some degree by manual calculation of the number of tests, though this should lead to only a relatively small increase in the PC runtime. 

\subsection{Parent Recovery}

\begin{figure*}[!t]
    \centering
    \subfloat[]{
    \includegraphics[width=3in]{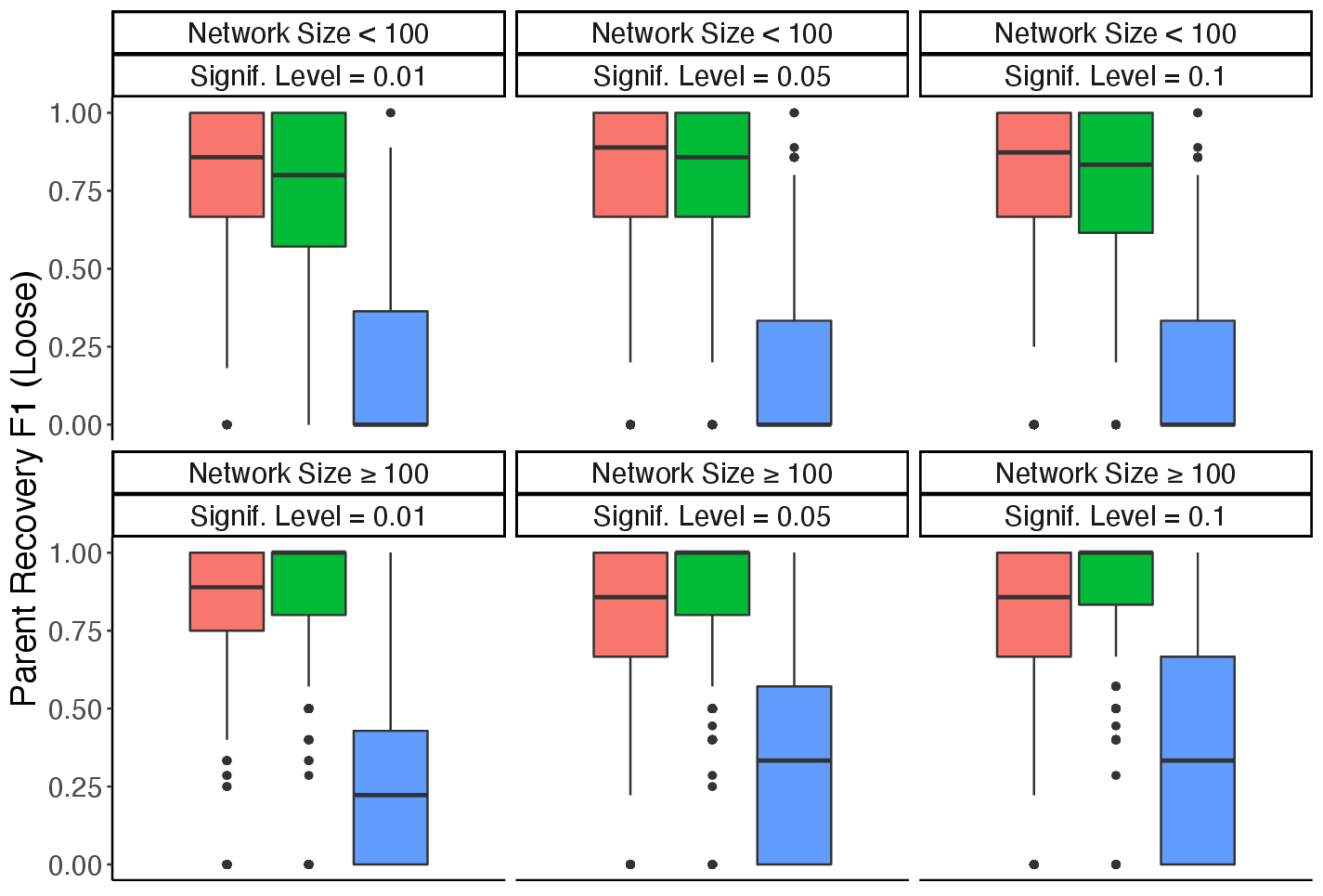}%
    \label{fig:praf1loose}
    }
    \hfil
    \subfloat[]{
    \includegraphics[width=3in]{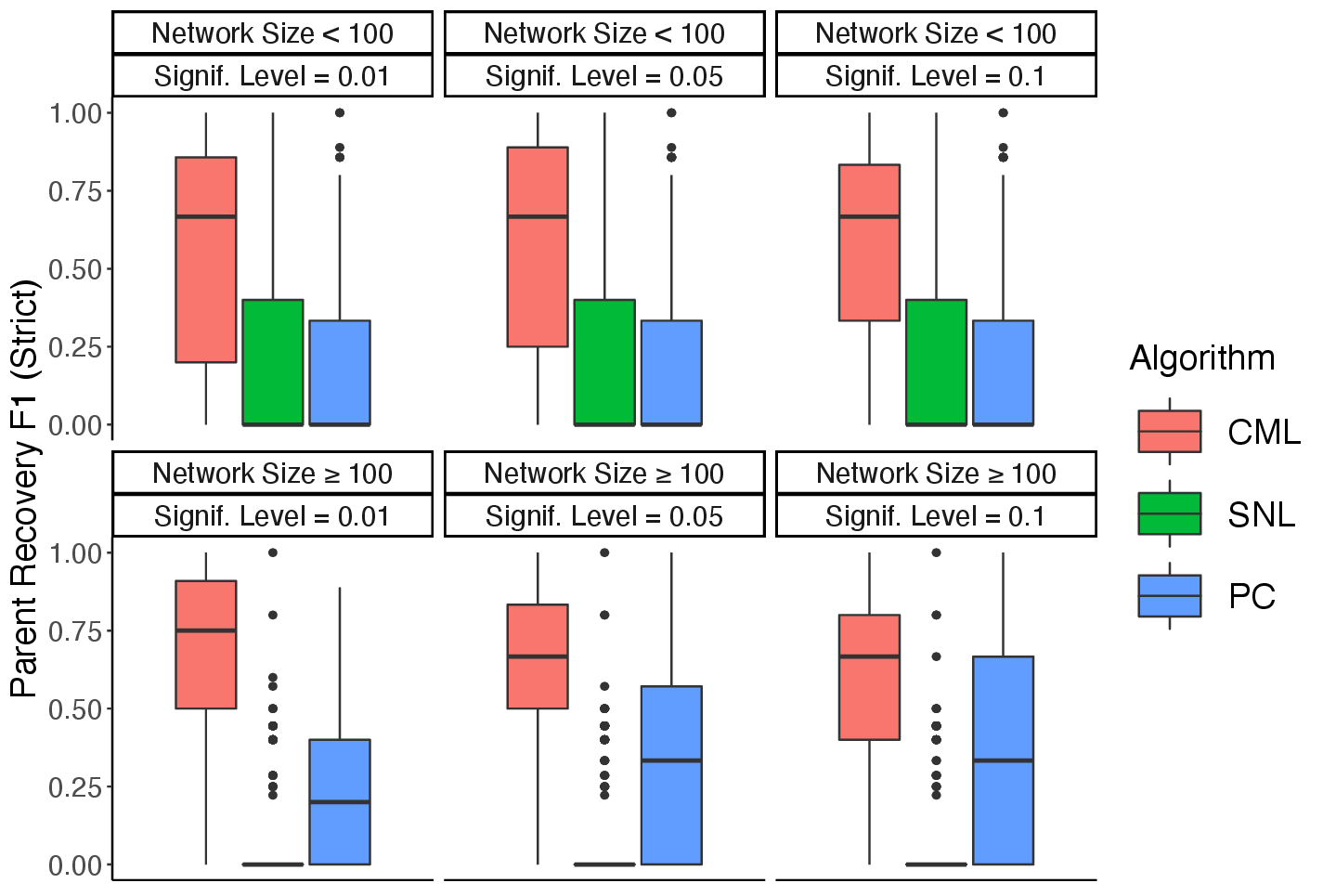}%
    \label{fig:praf1strict}
    }
    \caption{The distributions of the parent recovery accuracy F1 scores  for different network size and significance level combinations. (a) The loose F1 score; (b) The strict F1 score.}
\label{fig:parentrecovery}
\end{figure*}

\noindent One of the primary goals of this work is to identify as many parents of a target node as possible, since this would reduce the uncertainty in parent set adjustment for estimating causal effects (cf. Remark~\ref{rmk:causaleffectinputs}). Therefore, beyond pure conformity to the ground truth, it is of special interest to examine how well our algorithm performs in identifying the parents of target nodes. Here, the parents of node $i$ are defined by directed edges into $i$ in the true CPDAG, since these are the parents which are identifiable with observational data alone. 

In Fig.~\ref{fig:parentrecovery}, we compare the parent recovery accuracy (PRA) of different algorithms, using the F1 score to make the comparison. The PRA F1 score is given by $\text{F1} = \frac{2TP}{2TP + FP + FN}$, where $TP$, $FP$, and $FN$ are the number of true positives, false positives, and false negatives for the estimated parent set, respectively. The F1 score is considered under two different principles for counting. The loose version of the score, used in Fig.~\ref{fig:praf1loose}, counts an estimated undirected edge between a parent and the target node in the true CPDAG as a true positive, while the strict version, used in Fig.~\ref{fig:praf1strict}, counts such an edge as a false negative. Considering again the example in Fig.~\ref{fig:illustration}, suppose we call the CPDAG of the DAG in Fig.~\ref{fig:illustrationDAG} $G'$ and denote by $G'_{NB_T}$ its subgraph over $NB_T$, where $T=\{3,8\}$. The ground truth graph, $G'_{NB_T}$, will be equivalent to the graph in Fig.~\ref{fig:cml} without the red edges. Suppose we are considering the SNL output, depicted in Fig.~\ref{fig:singlenbhd}. For $T$, the identifiable parents to be recovered are $pa_{G'_{NB_T}}(3) \cup pa_{G'_{NB_T}}(8) = \{1,2,9\}$. The SNL output correctly identifies parent edges $(1,3)$ and $(2,3)$, along with placing an undirected edge between nodes 8 and 9. Under the loose version of the score, we count the undirected edge between nodes 8 and 9 as a true positive, thus assigning the SNL output a PRA F1 score of 1. Under the strict version of the score, however, the undirected edge does not count, and the SNL output receives a score of 2/3. Distinguishing the results in two sets of scores allows us to consider how well an algorithm performs in finding possible parents (loose) as well as in providing greater specificity to the possible parent set with correctly defined invariant parent edges (strict).

When comparing algorithm performance in parent specification, we first observe that the CML algorithm is consistently better than the PC algorithm in identifying parents of the target nodes for all settings. This shows that the accumulation of errors in global learning will deteriorate performance in local neighborhoods. Second, the differences between the two plots imply that the CML algorithm correctly oriented far more directed edges than the SNL algorithm did. Consider as an example the results for $p\geq 100$ with significance level $0.01$. In the loose PRA F1 score, the SNL algorithm narrowly outperforms the CML algorithm when comparing the reported percentiles. However, in the strict version of the score, the SNL algorithm performs significantly worse than the CML algorithm and even the PC algorithm. In fact, apart from a few isolated cases, all of the PRA F1 scores for the SNL algorithm are 0 under the strict definition of the PRA F1 score. This is expected and confirms our discussion about Fig.~\ref{fig:illustration}: In the SNL algorithm, the orientation of edges in one target neighborhood has no influence on the orientation of edges in another neighborhood when those neighborhoods do not share nodes. Consequently, the SNL algorithm fails to orient as many edges as the CML algorithm, which has the advantage of inferring additional orientations using edges between neighborhoods. Table~\ref{tab:ancestors} reports the median number of between-neighborhood edges (BNE) in the CML output for each network and target size pair. An entry of N/A indicates that none of the runs for the network and target set size pair qualified for analysis. There we observe that, for larger networks and larger target sets, more between-neighborhood edges are generally identified in the sample CML output, which may help to explain its superior performance even in larger networks where we observe the deterioration of SNL performance.

\begin{table}[!t]
    \centering
        \caption{Median number of between-neighborhood edges}
    \label{tab:ancestors}
    \begin{tabular}{l|rrrr}
    \toprule
Network & $p$ & $|T|=2$ & $|T|=3$ & $|T|=4$\\
\hline
\texttt{insurance} & 27 & 1 & 1 & N/A\\
\texttt{mildew} & 35 & 1 & 1 & 0\\
\texttt{alarm} & 37 & 0 & 0 & 1\\
\texttt{barley} & 48 & 0 & 2 & 3\\
\texttt{hepar2} & 70 & 1 & 2 & 2\\
\texttt{arth150} & 107 & 1 & 1 & 3\\
\texttt{andes} & 223 & 2 & 4 & 8\\
\texttt{diabetes} & 413 & 1 & 4 & 4\\
\texttt{pigs} & 441 & 1 & 2 & 4\\
\texttt{link} & 724 & 1 & 5 & 8\\
\texttt{munin2} & 1003 & 1 & 2 & 6\\
\texttt{munin4} & 1038 & 1 & 4 & 5\\
\texttt{munin3} & 1041 & 1 & 2 & 7\\
\bottomrule
\end{tabular}
\end{table}

\section{Learning Gene Regulatory Networks}\label{sec:generegulatory}

\noindent We investigate our local methods further using a gene expression data set. 
Chu et al.~\cite{Chu2016-nz} generated single-cell RNA-sequencing (scRNA-seq) data during the early differentiation of human embryonic stem cells (hESCs) into different lineage-specific progenitor cells.
Causal discovery for a gene regulatory network in hESCs could potentially provide greater insight into critical anatomical functions and aid developments in regenerative medicine \cite{Chu2016-nz}. 

These data are available through the Gene Expression Omnibus series accession number GSE 75748. While the original data set is quite large with $n=1018$ cells and $p \approx 20,000$ genes, we use the processed data set from \cite{li2021learning}, which imputes missing values, applies a log-transformation to the expression levels, and reduces the number of genes to $p=51$ from a selection given by \cite{Chu2016-nz}.

\subsection{Data Setup}

\noindent Using a similar procedure as we did for the simulated data, we will examine different parameter combinations to compare the performance between the algorithms, and we refer to these unique parameter combinations as settings. Setting parameters for the local algorithms still consist of the significance thresholds for the Mb and skeleton recovery algorithms, $\alpha_{Mb}$ and $\alpha_{skel}$, respectively, and the maximum potential separating set size, $\ell_{max}$. The PC algorithm, as a global method, does not require $\alpha_{Mb}$, but it does require the other two setting parameters. In order to maintain uniformity, each setting is defined by parameters $\theta = (\alpha_{skel}, \ell_{max})$ for both local and global algorithms, where $\alpha_{Mb}=10^{-5}$ for all local algorithms. The Mb algorithm significance level was chosen according to the reasonable distribution of the sizes of the parent-child (P/C) sets selected by MMPC, which ranged from two to seven nodes with most P/C sets containing four or five nodes. 
All three algorithms, CML, SNL and PC, use the same settings given by $\theta \in \{10^{-5},10^{-2}\} \times \{3,5\}$.
This yields 12 total combinations of algorithms and settings. We selected target sets of cardinality $|T|\in \{2,3,4\}$ at random, with two sets of each size.

\subsection{Cross-Validation Procedure}

\noindent We randomly assign the 1,018 cells to 10 different sets, or folds, of roughly equivalent size. Our cross-validation (cv) procedure uses 10 folds, where each fold will be a held-out set used as the ``testing data'' to evaluate the model trained on the other nine folds, the ``training data.'' The results we present include the ``testing data'' likelihood scores on the ``training data'' models of all 10 folds for each parameter setting, which altogether provide a sufficient basis of comparison for the different algorithms across various settings.

For each fold of our cv procedure, we run the algorithms using the ``training data'' in order to learn the local structures around the target nodes, and then we calculate the model performance on the ``testing data.'' That is, let $\mathcal{D}$ represent the entire data set and suppose we are considering setting $i$. We partition the data $\mathcal{D}=(\mathcal{D}_{1},\mathcal{D}_{2},\ldots,\mathcal{D}_{9},\mathcal{D}_{10})$. For cv fold $j$, we define the ``training set'' as $\mathcal{D}_{\text{train};j} = \mathcal{D}_{-j} = (\mathcal{D}_{1},\mathcal{D}_{2},\ldots,\mathcal{D}_{j-1},\mathcal{D}_{j+1},\ldots,\mathcal{D}_{9},\mathcal{D}_{10})$ and the ``testing set'' is $\mathcal{D}_{\text{test};j}=\mathcal{D}_{j}$. 

Using $\mathcal{D}_{\text{train};j} = (X_1^{(-j)},X_2^{(-j)},\ldots,X_{p-1}^{(-j)},X_p^{(-j)})$, where $p$ is the total number of genes we are considering and $X_k^{(-j)}$ contains the expression levels of gene $k$ for all cells excluding those in the $j$th fold, we run CML and SNL for each set of target nodes $T=\{t_m:m=1,\ldots,|T|\}$ and obtain the estimated local structures, denoted $G^{(j)}_T$. In addition, we run the PC algorithm once for each fold and obtain the estimated structure $G^{(j)}$ over the entire node set, from which we extract the relevant subgraphs $G_T^{(j)}$ for each node set $T$. For each algorithm, we then collect the estimated parent set in $G_T^{(j)}$ for each target node, 
$\widehat{pa}_{G_T^{(j)}}(t_m)$, which we use to estimate the SEM for target node $t_m$.
For simplicity of notation, we also write $\widehat{pa}^{(j)}_{T}(t_m):=\widehat{pa}_{G_T^{(j)}}(t_m)$. 

In order to compare the results of our estimates, we use the test data log-likelihood for the held-out fold, denoted $\mathcal{D}_{\text{test};j} = (X_1^{(j)},X_2^{(j)},\ldots,X_{p-1}^{(j)},X_p^{(j)})$, where $X_k^{(j)}$ contains the expression levels of gene $k$ for cells in the $j$th fold. To compute this metric, we first estimate the SEM models for each of the target nodes $t_m \in T$ using OLS regression of $X_{t_m}^{(-j)}$ on $X^{(-j)}_{\widehat{pa}^{(j)}_{T}(t_m)}$, 
which we rewrite as $X^{(-j)}_{m;T}$ for notation brevity. This regression gives us the intercept of the model $\hat{\beta}^{(-j)}_{0m;T}$, the regression coefficients $\bm{\hat{\beta}}^{(-j)}_{m;T}$,
and the estimated error variance $(\hat{\sigma}^{(-j)}_{m;T})^2$. 
Using these estimated parameters, we can compute the test data log-likelihood, normalized by the cell number $n_j$ and the target size $|T|$, on the held-out fold as
\begin{multline*}
    \label{eqn:lltest}
    \ell\ell_{T}^{(j)} = -\frac{1}{n_{j}|T|}\sum_{m = 1}^{|T|} \bigg[\frac{n_{j}}{2}\log\left(2\pi(\hat{\sigma}^{(-j)}_{m;T})^2\right)\\ 
    + \frac{1}{2(\hat{\sigma}^{(-j)}_{m;T})^2} \left\|X_{t_m}^{(j)}- \hat{\beta}^{(-j)}_{0m;T}\mathbf{1}_{n_{j}} - X^{(j)}_{m;T}\bm{\hat{\beta}}^{(-j)}_{m;T}\right\|_2^2\bigg],
\end{multline*}
where $n_j$ is the number of cells in $\mathcal{D}_{\text{test};j}$ and $\mathbf{1}_k$ is a vector of ones with length $k$. We can now use the distribution of normalized test data log-likelihood across all cv folds to compare the different algorithms. 

\begin{figure}[!t]
    \centering
    \includegraphics[width=3in]{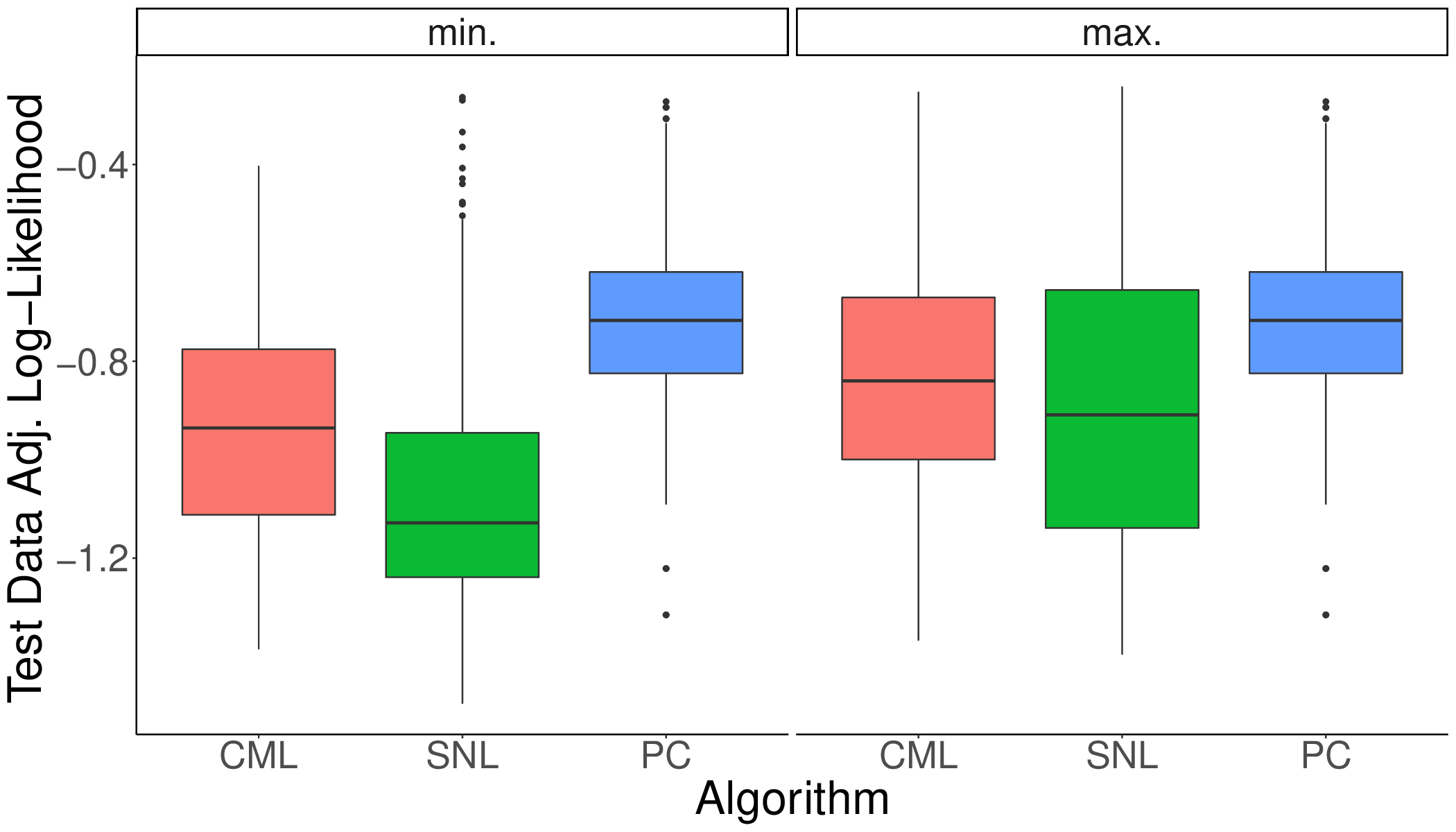}
    \caption{The distributions of test data log-likelihood scores for different algorithm and parent-identification strategy combinations. Scores are adjusted by normalizing according to the number of cells and the number of nodes in the target set for easier comparison.}
    \label{fig:rnatestll}
\end{figure}

However, this basis of comparison is somewhat limited in scope, since it automatically excludes nodes which are connected to the target nodes by an undirected edge as potential parents. Consequently, in addition to using the estimated parents to evaluate the local learning methods, we also find the maximum-sized set of jointly valid parents and recalculate the normalized test data log-likelihood with the new parent set. The maximum-sized set of jointly valid parents is generated by a modification of an output graph which converts as many undirected edges as possible to directed edges into a target node without introducing any new $v$-structures. Then, using the modified graph, we extract a new set of parents for the target nodes, which we call the maximum-sized parent sets. This helps broaden the perspective of comparison, since the additional results provide a range of possible values given the graphical output. A complete description of our procedure for finding maximum parent sets is provided in the Supplementary Material. 
We follow the same cv procedure to compare test data log-likelihood using estimated maximum-sized parent sets.


\subsection{Model Performance and Parent Recovery}

\noindent Using the cross-validation test data log-likelihood results in Fig.~\ref{fig:rnatestll}, we can identify the differences between the estimation algorithms. The plot uses two different strategies for identifying the parent sets for our model calculations, with results from each strategy provided in separate panels. The panel on the left provides the results from the estimated graph where parent sets are extracted using the directed edges, thereby representing the minimum-sized parent set (min.). The panel on the right uses the aforementioned procedure to identify the maximum-sized parent set (max.) from the estimated graph for our results. Furthermore, we can see the specific advantages of coordinated learning. One of the clearest trends in Fig.~\ref{fig:rnatestll} is the relative decline of the scores for the SNL algorithm output (green boxplot in the ``min.'' panel) in comparison to the other scores and strategies, including the SNL algorithm using the maximum parent sets. 
The CML algorithm, on the other hand, has a much smaller difference in the scores for the different parent identification strategies. The decreased variability in test data log-likelihood also highlights the value of coordinated learning, since this reduces the uncertainty in parent set identification. 
Additionally, we find that the CML algorithm improves the SNL algorithm and is fairly competitive with the global PC algorithm using the results in Fig.~\ref{fig:rnatestll}.

Moreover, Table~\ref{tab:directedcomp} provides a comparison in the number of directed edges (DE) within the same neighborhood between the SNL and CML algorithms, as well as the percentage of within-neighborhood edges which are directed (DEP), across different settings and target set sizes. The results are grouped according to settings and averaged over all cv folds and all targets sets of a given size. 
The results show a clear advantage for the CML algorithm in every case, identifying more directed edges within the target neighborhoods. It also indicates that the distributions in parent set sizes for the CML algorithm do not differ across parent identification strategies as much as they do for the SNL algorithm, due to edge orientations facilitated by the between-neighborhood edges (BNE).

\begin{table}[t]
    \centering
    \caption{Local Algorithms Directed Edges Comparison}
    \label{tab:directedcomp}
    \begin{tabular}{rrr|rr|rrr}
\toprule
 & & &   \multicolumn{2}{c}{SNL} & \multicolumn{3}{|c}{CML} \\
 $\alpha_{skel}$ & $\ell_{max}$ & $|T|$ &  DE & DEP & DE & DEP & BNE \\
\midrule
1e-05 & 3 & 2 & 1.7 & 26\% & 2.5 & 37\% & 2.0\\
1e-05 & 3 & 3 & 7.3 & 54\% & 9.9 & 61\% & 0.9\\
1e-05 & 3 & 4 & 8.5 & 46\% & 10.8 & 53\% & 1.8\\ \hline
1e-05 & 5 & 2 & 0.0 & 0\% & 1.2 & 20\% & 2.0\\
1e-05 & 5 & 3 & 1.0 & 10\% & 3.1 & 26\% & 0.9\\
1e-05 & 5 & 4 & 1.4 & 12\% & 5.0 & 35\% & 1.7\\ \hline
1e-02 & 3 & 2 & 3.0 & 28\% & 4.8 & 43\% & 3.7\\
1e-02 & 3 & 3 & 8.0 & 48\% & 11.0 & 57\% & 3.0\\
1e-02 & 3 & 4 & 9.4 & 38\% & 13.0 & 51\% & 3.9\\ \hline
1e-02 & 5 & 2 & 2.5 & 31\% & 4.0 & 44\% & 3.5\\
1e-02 & 5 & 3 & 5.2 & 35\% & 9.6 & 56\% & 2.9\\
1e-02 & 5 & 4 & 6.4 & 30\% & 11.2 & 50\% & 2.8\\
\bottomrule
\end{tabular}
\end{table}

Finally, we take a closer look at the complexity comparison between the various algorithms. The two modes in the distributions of the number of tests and of the runtimes in Fig.~\ref{fig:sccomplexityandruntime} correspond to the two significance levels we use for our experiment: the fewer tests (faster times) for $\alpha_{Mb}=10^{-5}$ and the greater tests (slower times) for $\alpha_{Mb}=10^{-2}$. In Fig.~\ref{fig:sctestplot}, we observe that, for target sets with 2 or 3 nodes, we can confidently expect the number of tests for the local algorithms to be fewer than those of the global algorithm. We do not have the same results in Fig.~\ref{fig:sctimeplot}, but we do see that the time distributions of the local algorithms mostly overlap with the bottom of the distribution for the global PC algorithm, which corresponds to the smaller of the significance levels ($\alpha_{skel}=10^{-5}$) used for the skeleton CI tests. For these results, Fig.~\ref{fig:sctimeplot} reports a fair comparison in runtime between local and global algorithms, where we manually calculated the number of tests for the PC algorithm in a separate run of the same procedure. In both plots, we see a clear increase in complexity as the size of the target set increases for local algorithms. Since this is a relatively low-dimensional data set, larger target sets will include most of the nodes in the network as their first- and second-order neighbors, which is a less than ideal situation for local algorithms. These results cohere with previous complexity discussions, especially since this is a smaller data set.

\begin{figure}[!t]
    \centering
    \subfloat[]{
    \includegraphics[width=3in]{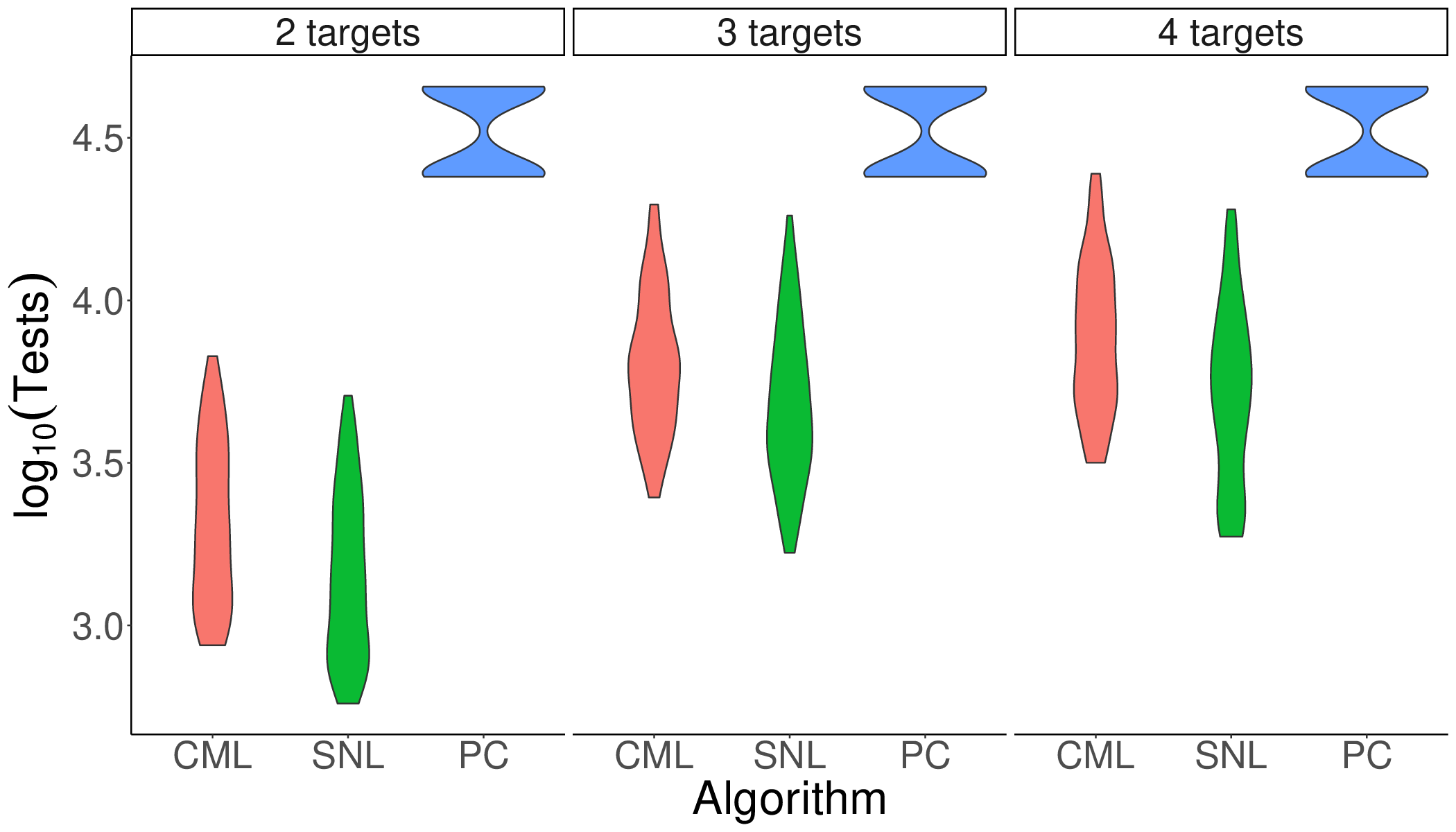}%
    \label{fig:sctestplot}
    }
    \hfil
    \subfloat[]{
    \includegraphics[width=3in]{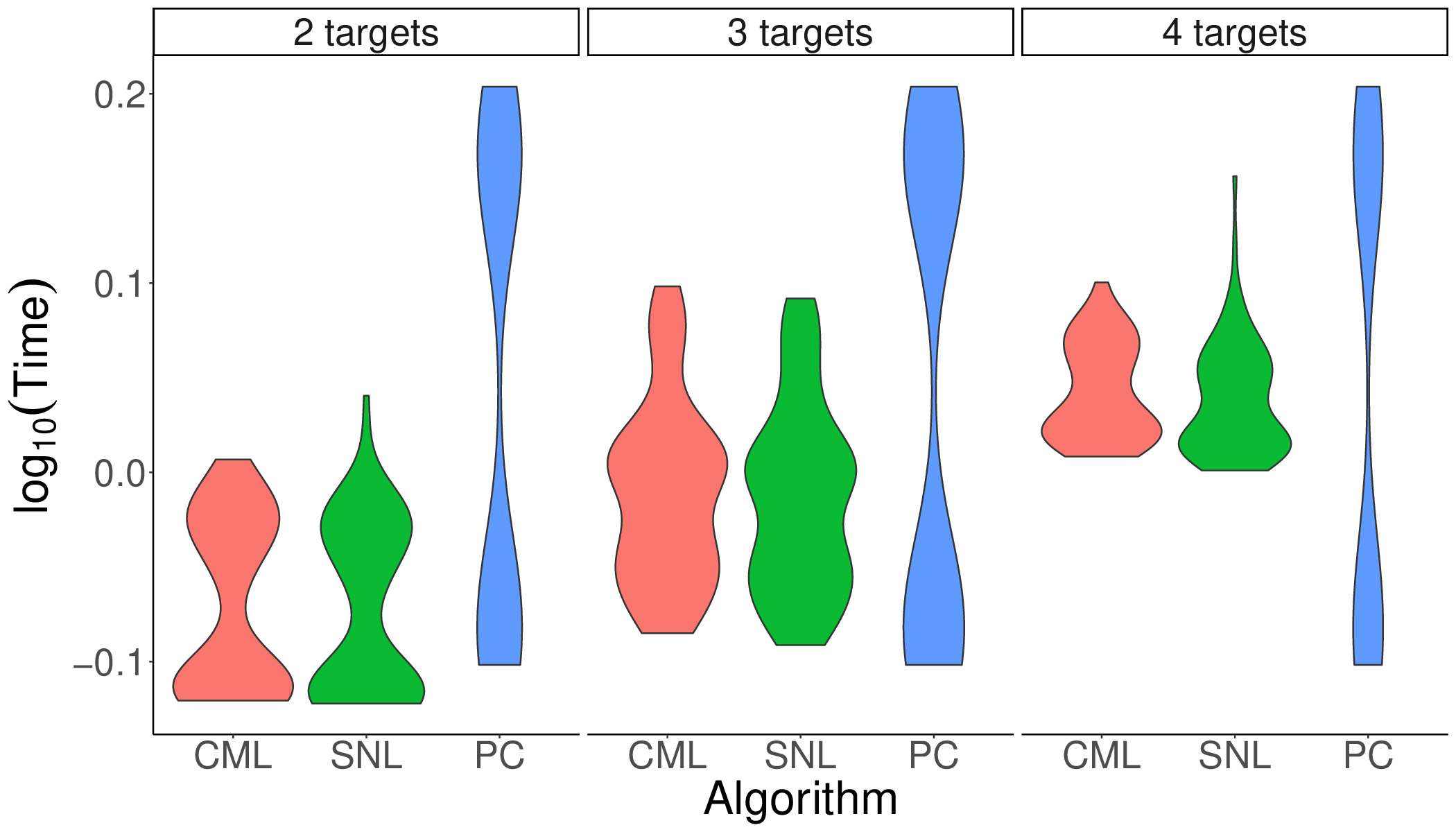}%
    \label{fig:sctimeplot}
    }
    \caption{Comparisons between the global and local algorithms with respect to complexity. (a) The distributions of the number of CI tests and (b) runtime for varying target set sizes on a log scale.}
\label{fig:sccomplexityandruntime}
\end{figure}

The results from this data set provide support for the claims we have presented in this work. Coordinated learning provides clear benefits in comparison to learning within individual neighborhoods, especially in estimation precision for modeling. Additionally, assuming a sparse network with target neighborhoods of a significantly smaller size than the dimension of the underlying DAG, CML is competitive with a global algorithm such as the PC.

\section{Conclusion}
\label{sec:conclusion}

\noindent For estimating the causal structure in local neighborhoods and facilitating the estimation of causal effects of specified target nodes, Coordinated Multi-Neighborhood Learning is a sound algorithm with demonstrable empirical benefits. Compared to existing methods, this algorithm is more efficient and scalable while maintaining a degree of accuracy comparable to or better than global methods. However, as we observed with the gene expression data set, the extent of these efficiency gains depends on multiple factors, such as the selected target sets and the size of the neighborhoods. Though in this work we only conducted empirical analysis on Gaussian data, the method and theoretical results developed in this paper are more broadly applicable to different kinds of data and DAG models. A future direction of research is to observe the performance of our algorithm with simulated and real-world data from different distributions. Along similar lines, another potentially fruitful research direction would consider modifications to our algorithm for the use of experimental data with interventions. Additionally, because this algorithm depends on Mb recovery, it is constrained in its effectiveness by the performance of the pre-processing method. We may also realize greater efficiency by better integrating the Mb estimation algorithm and CML to avoid repeating the same CI tests for skeleton recovery. Finally, the current selection of potential separating sets during the skeleton stage is too conservative. In the future, we may restrict our attention to only subsets of one of the adjacency sets at a time rather than to subsets of the union over both adjacency sets, which should improve the efficiency and speed of the algorithm.

\section{Acknowledgements}

\noindent This work was supported by NSF grants DMS-1952929 and DMS-2305631. This work used computational and storage services associated with the Hoffman2 Shared Cluster provided by UCLA Institute for Digital Research and Education’s Research Technology Group.

\appendix

\section{Proofs of Theoretical Results}

\begin{proof}[Proof of Lemma~\ref{lm:mag}]
As a subgraph of DAG $\calG$, the graph $\calG_N$ is a DAG as well and it follows that $\calG_N$ is ancestral because it has no directed cycles. For the additional edges in $\calG^*_N$, it follows from their construction that no directed or almost directed cycles will be introduced. For any $(i,j) \in B$ such that there is an inducing path relative to $L = V - N$, if we set $i \rightarrow j$, then it follows that $j \notin an_\calG(i)$ and $j \notin an_{\calG^*_N}(i)$. If we set $i \leftrightarrow j$, then $i \notin an_\calG(j)$ and $j \notin an_\calG(i)$ by construction, which implies $i \notin an_{\calG^*_N}(j)$ and $j \notin an_{\calG^*_N}(i)$. Therefore, $\calG^*_N$ is ancestral since it has neither directed nor almost directed cycles. It is proved in \cite{richardsonAncestralGraphMarkov2002} that DAGs are maximal ancestral graphs, and therefore $\calG_N$ is a maximal ancestral graph. After we add directed edges $i \rightarrow j$ for $(i,j) \in B$, we still have a DAG and thus preserve maximality. 
Furthermore, we would retain maximality after bidirected edges are added between nodes in distinct neighborhoods. We prove the last assertion by contradiction. Assume there is an inducing path $p_I = \langle \alpha,\beta,\gamma,\ldots,\epsilon,\omega\rangle$ in $\calG_N^*$ with non-adjacent endpoints $\alpha,\omega$. The orientation of the edges on $p_I$ must be $\alpha \astarrow \beta \leftrightarrow \gamma \leftrightarrow \cdots \leftrightarrow \epsilon \arrowast \omega$, where $\ast$ is a wildcard which can represent either a tail or an arrowhead. At least one of the $\ast$'s must be an arrowhead because each of $\beta$ and $\epsilon$ is an ancestor of either $\alpha$ or $\omega$ by the definition of inducing path. Without loss of generality, let us assume $\alpha\leftrightarrow\beta$, which implies that $\alpha$ and $\beta$ are not in the same neighborhood because bidirected edges only exist between distinct neighborhoods in $\calG_N^*$. It is easy to see that this path $p_I$ corresponds to an inducing path relative to $L$ in the original DAG $\calG$, and by Assumption~\ref{asp:inp}, $\alpha$ and $\omega$ must be in different target neighborhoods. Therefore, by construction $(\alpha,\omega)$ is an edge in $\calG^*_N$. This contradicts the assumption that $\alpha$ and $\omega$
are non-adjacent endpoints of the path $p_I$. 
\end{proof}

\begin{proof}[Proof of Theorem~\ref{thm:pop}]
We begin by showing that we recover the skeleton of $\calG^*_N$ after the skeleton recovery stage of our algorithm. Since the distribution $P(X_1,\ldots,X_p)$ is faithful to $\calG$, for $i,j\in N$ and $S\subseteq N$, conditional independence of $X_i$ and $X_j$ given $X_S$ is equivalent to $m$-separation of nodes $i$ and $j$ given set $S$. 
Therefore, after line~\ref{lst:line:endglobalskel}, 
the edge set $E$ corresponds to the skeleton of the true MAG over $N$, which is a supergraph of the skeleton of $\calG^*_N$. For any $t\in T$ and $(i,j) \in NB_t$, subsets of $N_i^1$ and of $N_j^1$ will be sufficient to remove edges between non-adjacent $i,j$ in $\calG$. Then, after having used second-order neighbors within each neighborhood, we obtain the skeleton of $\calG^*_N$ after line~\ref{lst:line:endtargetnbhdskel}.

While the correctness of the FCI rules have been shown by \cite{zhang2008}, it remains to show that our use of the rules in the CML algorithm is valid. The rules only depend on the skeleton and whether a node $\gamma\in N$ is in a separating set $S_{ij}$ ($\mathcal{R}_0$ and $\mathcal{R}_4$). For any separating set $S_{ij}$ found in the skeleton recovery stage, let $S'_{ij}=S_{ij} \cap N$. 
In the application of the FCI rules involving the separating set $S_{ij}$, using $S'_{ij}$ instead will lead to the same orientation result since $\gamma \in S_{ij}$ if and only if $\gamma \in S'_{ij}$ for any $\gamma \in N$. For an edge $(i,j)$ deleted in the first phase of skeleton learning, $S'_{ij}=S_{ij}$. If $(i,j)$ is deleted in the second phase given the separating set $S_{ij}$, then $S'_{ij}$ is a separating set for $i$ and $j$ in $\calG^*_N$. This is because a path blocked by second-order neighbors $Z=S_{ij}\setminus S'_{ij}$ becomes disconnected in $\calG^*_N$ after $Z$ is removed. Thus, $\{S_{ij}'\}$ consists of separating sets for all pairs of non-adjacent nodes $\calG^*_N$, which guarantees sound and complete orientation of $[\calG^*_N]$ by the FCI rules. This completes the proof.
\end{proof}

Proofs of Corollary~\ref{cor:Gauconsistency} and Theorem~\ref{thm:pagconsistency} are provided in the Supplementary Material.

\bibliographystyle{IEEEtran}
\bibliography{cml}

\end{document}